\documentclass{article}

\usepackage[preprint]{neurips_2026}
\usepackage{algorithm}
\usepackage{amsmath}
\usepackage{algpseudocode}
\usepackage{multirow}
\usepackage{rotating}

\PassOptionsToPackage{dvipsnames}{xcolor}
\RequirePackage{xcolor}
\definecolor{navyblue}{rgb}{0.0,0.0,0.5}
\definecolor{ForestGreen}{RGB}{34,139,34}
\usepackage[colorlinks=true,urlcolor=magenta,citecolor=navyblue]{hyperref}

\usepackage{amsmath}
\usepackage{amssymb}
\usepackage{mathtools}
\usepackage{amsthm}

\usepackage{thm-restate}

\usepackage[capitalize,noabbrev]{cleveref}

\theoremstyle{plain}
\newtheorem{theorem}{Theorem}[section]

\theoremstyle{definition}

\theoremstyle{remark}
\newtheorem{remark}[theorem]{Remark}

\usepackage[utf8]{inputenc} 
\usepackage[T1]{fontenc}    
\usepackage{url}            
\usepackage{booktabs}       
\usepackage{amsfonts}       
\usepackage{nicefrac}       
\usepackage{microtype}      
\usepackage{xcolor}         

\usepackage{enumitem}

\usepackage{natbib}
\renewcommand{\cite}{\citep}

\input{definitions}

\title{Data-Free Reservoir Features for Efficient Long-Horizon Cold-Start Continual Learning}

\author{%
  Augustinas Jučas \\
  Department of Computer Science \\
  University of Oxford \\
  \texttt{augustinas.jucas@st-hildas.ox.ac.uk} \\
  \And
  Yangchen Pan \\
  Department of Engineering Sciences \\
  University of Oxford \\
  \texttt{yangchen.pan@eng.ox.ac.uk}
}

\begin{document}
\maketitle

\begin{abstract}
Cold-start exemplar-free class-incremental learning requires learning a growing set of classes without replay, external pretraining, or a large initial task. Existing cold-start methods typically either train the backbone throughout the stream and compensate for semantic drift, or freeze a backbone after the first task, producing features biased toward the initial classes. These choices also create a computational tension: drift-compensation methods require repeated backbone training and increasingly expensive updates as the task horizon grows, while frozen-backbone methods are cheap but weak under cold start. We study a third option: a feature extractor that is never fit to image data at all. We propose CIRCLE, a class-incremental classifier built from fixed bidirectional two-dimensional reservoir features, adapted from BiRC2D for image classification, and streaming linear discriminant analysis heads. CIRCLE groups multiple random reservoir instantiations into feature ensembles and averages the softmax outputs of independent SLDA heads, yielding a tunable bias-variance tradeoff between richer random features and prediction-level ensembling. Because the feature extractor is fixed and the head admits streaming closed-form updates, CIRCLE performs sample-wise training without replay, task-boundary information, or backbone backpropagation. On CIFAR-100, TinyImageNet, ImageNet-Subset, and ImageNet-1k, CIRCLE is competitive at 10-20 task splits and substantially outperforms strong CS-EFCIL baselines at 50, 100, and 500 task splits, while training much faster than trained-backbone drift-compensation methods. Ablations show that the BiRC2D-style extractor, SLDA head, and balanced feature/prediction ensembling each contribute to the final performance.

\end{abstract}
\section{Introduction}

Continual learning (CL) studies models trained on non-stationary data streams, where naive updates can overwrite prior knowledge and cause \emph{catastrophic forgetting}~\cite{MCCLOSKEY1989109, FRENCH1999128}. In CL, class-incremental learning (CIL)~\cite{Zhou_2024} considers a sequence of tasks with disjoint class sets, requiring prediction over all seen classes at test time. We focus on the \emph{cold-start exemplar-free} (CS-EFCIL) setting~\cite{Zhou_2024}.


\textit{Exemplar-free} prohibits storing or replaying past data, while \textit{cold-start} further disallows pre-trained backbones or large initial tasks, preventing reliance on pre-trained representations. Hence, CS-EFCIL exposes a representation-learning tradeoff. To make this tradeoff precise, view a classifier as the composition of a feature extractor and a classification head: the feature extractor maps inputs to representations, while the head maps these representations to class predictions. Since exemplar-free methods cannot store old samples, the head often retains compact information about past classes, such as means and covariances. \textit{Most} cold-start methods continue training the feature extractor throughout the stream and compensate for semantic drift, i.e., changes in the feature space that make head's stored old-class statistics stale~\cite{yu2020semanticdriftcompensationclassincremental}. 
Such methods work well at moderate horizons, but drift control is approximate and can accumulate error over many task transitions. \textit{Alternatively}, one can freeze the feature extractor after training on the first task $\mathcal{T}_1$ and further only update a closed-form head, avoiding drift and repeated backbone training. 
However, in cold-start, the data available before freezing is only a small first task, resulting in non-transferable feature extractors biased towards  initial classes. Thus, existing approaches face a tradeoff between trained feature extractors that drift and frozen ones that are first-task biased and less transferable.


This tradeoff also has a computational dimension. Trained-backbone methods repeatedly backpropagate through the feature extractor and often remap old-class statistics after each task, making long-horizon streams potentially expensive~\cite{magistri2025efcelasticfeatureconsolidation,grzegorz2024taskrecency}. A natural third option is therefore to use a feature extractor that is frozen from the beginning and never fitted to image data. Such a feature extractor is neither pretrained externally nor biased by first-task training, and it avoids backbone backpropagation and expensive drift estimation entirely. The open question is whether such data-free features can be strong enough for nontrivial image classification.

We answer this question with \textbf{CIRCLE} (\textbf{C}lass-\textbf{I}ncremental \textbf{R}eservoir \textbf{C}lassifier with s\textbf{L}DA \textbf{E}nsemble), a cold-start exemplar-free classifier built from an \textit{untrained frozen feature extractor} and \textit{streaming linear discriminant analysis (SLDA) heads}~\cite{hayes2020lifelongmachinelearningdeep}. We adapt BiRC2D~\cite{nakanishi2024birc2d}, originally developed for spectral image anomaly detection, into such a feature extractor; ensemble multiple random instantiations  of the extractor through both feature concatenation and prediction averaging; and update each head closed-form using additive sufficient statistics. CIRCLE therefore supports sample-wise learning without replay, task-boundary information, or backbone optimization. Our contributions are the following:
\begin{itemize}[noitemsep, topsep=0pt]
    \item We study a data-free frozen-feature design for CS-EFCIL and propose CIRCLE, which combines a fixed random feature extractor with an SLDA ensemble.
    \item We show that CIRCLE is competitive at CS-EFCIL on standard 10--20 task splits and considerably stronger at 50--500 task splits on CIFAR-100~\cite{krizhevsky2009learning}, TinyImageNet~\cite{wu2017tiny}, ImageNet-Subset, and ImageNet-1k~\cite{ILSVRC15}, with substantially lower training time than trained-backbone baselines and no backbone-training.
    \item We ablate the feature extractor, analytic head choice, and grouped ensemble design, showing that each component substantially contributes to performance.
\end{itemize}

\section{Related Work}

Exemplar-free class-incremental learning \cite{zhou2023pycil} can be categorized along two axes: warm- vs. cold-start settings, and trained vs. frozen feature extractors. Warm-start methods rely on external pretraining or a large initial task, whereas cold-start methods learn from comparable task sizes without external data. Independently, trained-backbone methods update the feature extractor throughout the stream, while frozen-backbone methods update only the classifier head.

\textit{Trained-backbone} methods update the feature extractor $\phi$ jointly with the head throughout the stream, so head's old-class statistics must remain compatible with a feature space that changes over time; here warm- vs.\ cold-start differs only by pre-training. Their main limitation is \emph{semantic drift}~\cite{yu2020semanticdriftcompensationclassincremental}: as $\phi$ changes, old-class features shift and stored head statistics become outdated. Existing methods mitigate this via rehearsal/augmentation (PASS~\cite{Zhu_2021_CVPR}, IL2A~\cite{zhu2021class}), drift estimation (SDC ~\cite{yu2020semanticdriftcompensationclassincremental}, ADC~\cite{goswami2024resurrectingoldclassesnew}, LDC~\cite{gomezvilla2024exemplarfreecontinualrepresentationlearning}, AdaGauss~\cite{grzegorz2024taskrecency}, EFC~\cite{magistri2024elasticfeatureconsolidationcold}, EFC++~\cite{magistri2025efcelasticfeatureconsolidation}), or distillation-style regularization (LwF~\cite{DBLP:journals/corr/LiH16e}, AdaGauss, EFC, EFC++). 
They are effective on the short and moderate horizons most commonly studied in prior CS-EFCIL work. However, because most prior evaluations focus on short horizons ($T\in\{5,10,20\}$), the behavior of these methods at much longer horizons remains underexplored. In Section \ref{sec:experiments}, we evaluate $T\in\{50,100\}$ and an ImageNet-1k $T=500$ stress test, and find that performance degrades sharply for trained-backbone drift-compensation methods in this regime. They are also computationally expensive, with per-task cost growing with the number of seen classes~\cite{magistri2025efcelasticfeatureconsolidation}.

\textit{Warm-start frozen-backbone} methods freeze the feature extractor $\phi$ after pre-training or after a large initial task $\mathcal{T}_1$. A fixed $\phi$ enables closed-form head updates over frozen features, ensuring equivalence between continual training and joint training on all seen classes. Examples include ACIL~\cite{zhuang2022acilanalyticclassincrementallearning}, DS-AL~\cite{zhuang2024dsaldualstreamanalyticlearning}, FeCAM~\cite{goswami2024fecamexploitingheterogeneityclass}, RanPAC~\cite{mcdonnell2024ranpacrandomprojectionspretrained}, SLDA~\cite{hayes2020lifelongmachinelearningdeep}, and FeTrIL~\cite{petit2023fetrilfeaturetranslationexemplarfree}. Their key requirement is a \emph{transferable} $\phi$, typically obtained via pre-training or large $\mathcal{T}_1$; without this, performance degrades. Frozen-$\phi$ methods are more efficient -- no backpropagation or drift control after $\mathcal{T}_1$ are needed, however the initial $\mathcal{T}_1$ training can still be expensive.

\textit{Cold-start frozen-backbone} methods are largely under-explored. There is little work that freezes $\phi$ under cold-start;  existing baselines, primarily built for warm-start, simply train $\phi$ on a small $\mathcal{T}_1$ then freeze $\phi$ ~\cite{goswami2024fecamexploitingheterogeneityclass}, which could hurt performance due to $\phi$'s  strong representational bias towards a small $\mathcal{T}_1$. This highlights a trade-off within \textit{cold-start} EFCIL: training $\phi$ avoids representational bias but incurs drift and cost, while freezing $\phi$ enables efficient, drift-free updates but suffers from first-task bias. An ideal cold-start approach thus would achieve a non-biased, transferable and fixed $\phi$ without pre-training, implying a backbone $\phi$ that is not fit to any task-specific data.


Reservoir computing (RC) offers such a data-free $\phi$: reservoirs are fixed random nonlinear dynamical systems that embed input \textit{sequences} into high-dimensional spaces, with theoretical guarantees, with only a trained readout~\cite{jaeger2001echo,maass2002real}. Due to recurrent nature of RC, existing RC-based CL mainly considers time-series data~\cite{bereska2022continuallearningdynamicalsystems}; image reservoirs are harder because naive pixel flattening loses spatial structure~\cite{LopezOrtiz2024,chang2019reinforcementlearningconvolutionalreservoir}. BiRC2D~\cite{nakanishi2024birc2d} addresses this by applying bidirectional 2D reservoirs over image patches. We adapt this image-reservoir idea to class-incremental image classification. To our knowledge, this is the first study of data-free, reservoir-based CS-EFCIL. 

\section{CIRCLE: Bidirectional Reservoir Ensembles with Streaming LDA}
We now describe CIRCLE. The method has three components: 
(1) a fixed reservoir-based feature extractor adapted from 
BiRC2D~\citep{nakanishi2024birc2d} for classification 
(Section \ref{sec:method:birc2d}); 
(2) a feature ensembling scheme, which concatenates embeddings from several independent reservoir instantiations (Section \ref{sec:method:ensembling}); (3) a prediction ensembling scheme, which averages the probability outputs of several independent SLDA heads (Sections \ref{sec:method:ensembling} and \ref{sec:method:slda}).

\subsection{Data-free Reservoir Features}\label{sec:method:birc2d}

Let $\phi_s : \mathcal{X}\rightarrow\mathbb{R}^{d_r}$ denote a random reservoir-based feature extractor instantiated with a random seed $s$. The seed determines all random convolutional, reservoir, and projection weights, which remain fixed throughout the stream. We adapt BiRC2D~\cite{nakanishi2024birc2d} from image anomaly detection to image-level classification. Each image is first mapped by a small fixed random convolutional stem. The resulting feature map is divided into patches and processed by bidirectional 2D reservoir blocks, which scan spatial locations in four directions and concatenate the directional states. We then aggregate the final spatial map into a single image embedding and apply a fixed random projection to obtain the reservoir feature $\phi_s(x)$.

Compared with the original BiRC2D, our classification variant uses random convolutional preprocessing, ReLU reservoir activations with Kaiming initialization, spatial flattening into an image-level vector, and a final random upscaling layer. Full architectural details are given in Appendix~\ref{sec:appendix-circle}.

\subsection{Grouped Feature and Prediction Ensembling}
\label{sec:method:ensembling}

A single random reservoir can yield a noisy feature space, so CIRCLE uses multiple independent reservoir draws. Given $n$ random seeds, we partition them into $k$ equal-size disjoint groups, denoted $G_i, i=1,...,k$, such that $|\cup_{i=1}^k G_i| = n$ and $|G_i| = n/k = m$ (with $n,k,m$ integers by design). Feature ensembling then concatenates their embeddings: $\Phi_{G_i}(x) = ||_{s\in G_i} \phi_s(x)$ where $||_{s \in G_i}$ denotes concatenation of features generated by the seeds in $G_i$. This provides each classifier with a richer random-feature representation without training additional backbones.

CIRCLE also ensembles at the prediction level. We partition $n$ reservoirs into $k$ disjoint groups, attach one SLDA head to each group, and average their probabilities:
\[
\hat{p}(x)=\frac{1}{k}\sum_{j=1}^{k}
\mathrm{softmax}\!\left(h_j(\Phi_{G_j}(x))\right),
\]
where $h_j$ denotes the $j$th head. 
This grouped design interpolates between two extremes: using all reservoirs in one feature vector gives a rich representation but no prediction averaging, while using one reservoir per head gives many predictions but weak individual representations. 

\textbf{Bias-variance insight into the two-level ensembles.} Feature and prediction ensembling act on complementary axes, corresponding to a bias--variance tradeoff between \emph{feature richness} and \emph{ensemble diversity}. Concatenating all $n$ reservoirs yields a rich representation but a single classifier (no variance reduction), while assigning one reservoir per classifier yields diverse predictions but weak features. Grouping reservoirs into $k$ sets of size $m=n/k$ interpolates between these extremes: within-group concatenation reduces bias, while averaging across groups reduces variance. Proposition \ref{prop-biasvar} (Appendix~\ref{sec:appendix-theory}) formalizes this as a bias-variance tradeoff and motivates intermediate group sizes -- the optimal $m$ is typically neither $1$ nor $n$; empirically, Section \ref{sec:exp-ablation-ensembling} confirms that balanced grouping performs best. 


\subsection{Streaming LDA Head}
\label{sec:method:slda}
Each group of feature extractors is paired with a streaming linear discriminant analysis (SLDA) head~\cite{hayes2020lifelongmachinelearningdeep}. For class $c$, the head models features with mean $\mu_c$, shared covariance $\Sigma$, and empirical prior $\pi_c$. With ridge regularization $\Sigma_\lambda=\Sigma+\lambda I$, the class logit for feature $z$ is
\[
[h(z)]_c
=
z^\top \Sigma_\lambda^{-1}\mu_c
-\frac{1}{2}\mu_c^\top \Sigma_\lambda^{-1}\mu_c
+\log \pi_c .
\]

The head is updated using additive sufficient statistics: class counts $n_c$, class sums $T_c=\sum_{i:y_i=c}z_i$, and the global second moment $M=\sum_i z_i z_i^\top$. On observing $(z,y)$, we update
\[
n_y\leftarrow n_y+1,\qquad
T_y\leftarrow T_y+z,\qquad
M\leftarrow M+zz^\top .
\]
At evaluation time, these statistics materialize the LDA classifier in closed form. 
Given fixed features and the same regularization rule, the sufficient-statistic update is order-invariant and yields the same classifier as LDA trained jointly on all seen samples.
Thus CIRCLE has no representation drift and no order-induced approximation error in the head. Full update details are given in Appendix~\ref{sec:appendix-circle}. Section~\ref{sec:exp-ablation-head} shows that SLDA performs uniformly best on top of reservoir features, compared to alternative classification methods.

\textbf{Overall Algorithm}. For each incoming sample, CIRCLE computes the feature representation for each reservoir group and updates the corresponding SLDA sufficient statistics. At test time, each head produces logits from its current closed-form LDA parameters; probabilities are averaged across heads. Detailed pseudocode is provided in Appendix~\ref{sec:appendix-circle}.

\section{Experiments}\label{sec:experiments}

Our experiments test four questions within the cold-start EFCIL regime. First, how does CIRCLE compare with trained-backbone and freeze-after-first-task baselines as the task horizon increases? Second, does the use of many untrained reservoir draws make CIRCLE computationally expensive? Third, we ask whether the same fixed-feature analytic design remains viable in an extreme long-horizon setting, using a $500$-task ImageNet-1k stream.  Finally, which components of CIRCLE are necessary: the BiRC2D-style feature extractor, the SLDA head, and the grouped feature/prediction ensemble?

We first describe the common setup, then present the main comparison and computation study in Section~\ref{sec:exp-main}, the ImageNet-1k long-horizon stress test in Section~\ref{sec:exp-imagenet500}, and ablations in Section~\ref{sec:exp-ablations}. 

\textbf{Datasets and protocol.}
We evaluate on CIFAR-100, TinyImageNet, and ImageNet-Subset with time horizon $T \in \{10,20,50,100\}$, and evaluate additional long-horizon $T=500$ experiments on full ImageNet-1k. All experiments follow the cold-start exemplar-free setting: no replay, no pre-training, and classes split evenly across $T$ tasks. Results are averaged over 5 seeds unless specified. 

\textbf{Baselines.}
We compare against representative EFCIL methods across three categories: 
\emph{drift-compensation / distillation} (EFC++~\citep{magistri2025efcelasticfeatureconsolidation}, AdaGauss~\citep{grzegorz2024taskrecency}, ADC~\citep{goswami2024resurrectingoldclassesnew}, LwF~\cite{DBLP:journals/corr/LiH16e}), 
\emph{prototype rehearsal / augmentation} (PASS~\citep{Zhu_2021_CVPR}, IL2A~\citep{zhu2021class}), and 
\emph{frozen-backbone analytic methods} (ACIL~\citep{zhuang2022acilanalyticclassincrementallearning}, DS-AL~\citep{zhuang2024dsaldualstreamanalyticlearning}, FeCAM~\citep{goswami2024fecamexploitingheterogeneityclass}, FeTrIL~\citep{petit2023fetrilfeaturetranslationexemplarfree}), adapted to cold-start by training on $\mathcal{T}_1$ and freezing thereafter. All trained-backbone baselines use ResNet-18. 


We report mean incremental accuracy $\bar{A}$ and final accuracy $A_T$. For efficiency, we report end-to-end wall-clock training time, total parameter count, and learnable parameter count. All methods are tuned separately for each $(T,\mathrm{dataset})$ configuration, which is important because the long-horizon splits induce substantially different task dynamics. Tuning protocol, selected hyperparameters, and implementation details are given in Appendices~\ref{sec:appendix-experiment-details} and~\ref{sec:hyperparameters}.

\subsection{Main Comparison and Computational Efficiency}
\label{sec:exp-main}

\subsubsection{Performance for $T \in \{10, 20, 50, 100\}$}

Table~\ref{tab:main-comparison} compares CIRCLE with all baselines on CIFAR-100, TinyImageNet, and ImageNet-Subset across task horizons \(T \in \{10,20,50,100\}\).

\begin{table}[t]
\caption{Cold-start EFCIL results on CIFAR-100, TinyImageNet, and ImageNet-Subset for $T \in \{10, 20, 50, 100\}$. Each cell reports mean $\pm$ std over 5 seeds for mean incremental accuracy ($\bar{A}$) and final accuracy ($A_T$). Best mean per data setting in \textbf{bold}, second-best \underline{underlined}. For our CIRCLE, $A_T$ is invariant across task splits because continual training for CIRCLE is equivalent to joint training.}
\label{tab:main-comparison}
\scriptsize
\setlength{\tabcolsep}{4.42pt}
\centering
\begin{tabular}{llcccccccc}
\toprule
 & Algorithm & \multicolumn{2}{c}{T=10} & \multicolumn{2}{c}{T=20} & \multicolumn{2}{c}{T=50} & \multicolumn{2}{c}{T=100} \\
\cmidrule(lr){3-4}\cmidrule(lr){5-6}\cmidrule(lr){7-8}\cmidrule(lr){9-10}
 &  & \(\bar{A}\) & \(A_T\) & \(\bar{A}\) & \(A_T\) & \(\bar{A}\) & \(A_T\) & \(\bar{A}\) & \(A_T\) \\
\midrule
\multirow{11}{*}{\rotatebox[origin=c]{90}{CIFAR-100}} & AdaGauss & \underline{62.53 \ensuremath{\pm} 0.2} & \underline{46.57 \ensuremath{\pm} 0.4} & \textbf{56.11 \ensuremath{\pm} 0.3} & 40.80 \ensuremath{\pm} 0.3 & \underline{40.30 \ensuremath{\pm} 0.6} & 15.64 \ensuremath{\pm} 2.5 & 21.26 \ensuremath{\pm} 0.4 & 6.87 \ensuremath{\pm} 0.3 \\
 & EFC++ & \textbf{66.61 \ensuremath{\pm} 0.2} & \textbf{52.68 \ensuremath{\pm} 0.4} & 55.74 \ensuremath{\pm} 0.1 & \underline{42.10 \ensuremath{\pm} 0.5} & 38.93 \ensuremath{\pm} 0.2 & \underline{26.95 \ensuremath{\pm} 0.4} & 26.73 \ensuremath{\pm} 0.6 & \underline{16.40 \ensuremath{\pm} 0.3} \\
 & ADC & 61.98 \ensuremath{\pm} 0.2 & 44.71 \ensuremath{\pm} 0.2 & 49.86 \ensuremath{\pm} 0.4 & 31.89 \ensuremath{\pm} 0.5 & 30.57 \ensuremath{\pm} 0.3 & 16.22 \ensuremath{\pm} 0.4 & 8.11 \ensuremath{\pm} 0.1 & 3.35 \ensuremath{\pm} 0.2 \\
 & ACIL & 55.36 \ensuremath{\pm} 0.5 & 40.73 \ensuremath{\pm} 0.4 & 44.54 \ensuremath{\pm} 0.4 & 33.16 \ensuremath{\pm} 0.5 & 37.37 \ensuremath{\pm} 0.4 & 24.95 \ensuremath{\pm} 0.6 & 23.13 \ensuremath{\pm} 0.6 & 14.04 \ensuremath{\pm} 0.6 \\
 & FeCAM & 49.71 \ensuremath{\pm} 0.3 & 33.08 \ensuremath{\pm} 0.4 & 37.02 \ensuremath{\pm} 0.6 & 23.09 \ensuremath{\pm} 0.5 & 34.05 \ensuremath{\pm} 0.4 & 20.97 \ensuremath{\pm} 0.4 & 23.44 \ensuremath{\pm} 1.1 & 13.85 \ensuremath{\pm} 0.7 \\
 & FeTRIL & 52.06 \ensuremath{\pm} 0.2 & 36.60 \ensuremath{\pm} 0.3 & 42.25 \ensuremath{\pm} 1.4 & 27.71 \ensuremath{\pm} 1.2 & 32.76 \ensuremath{\pm} 0.2 & 20.24 \ensuremath{\pm} 0.2 & 22.15 \ensuremath{\pm} 0.2 & 12.04 \ensuremath{\pm} 0.2 \\
 & IL2A & 43.23 \ensuremath{\pm} 4.3 & 27.24 \ensuremath{\pm} 3.3 & 15.70 \ensuremath{\pm} 2.1 & 5.04 \ensuremath{\pm} 1.9 & 26.97 \ensuremath{\pm} 4.9 & 15.31 \ensuremath{\pm} 3.8 & 13.86 \ensuremath{\pm} 0.8 & 4.43 \ensuremath{\pm} 0.6 \\
 & PASS & 52.85 \ensuremath{\pm} 0.3 & 36.76 \ensuremath{\pm} 0.3 & 40.73 \ensuremath{\pm} 0.6 & 24.85 \ensuremath{\pm} 0.4 & 33.76 \ensuremath{\pm} 0.6 & 19.23 \ensuremath{\pm} 0.3 & \underline{28.70 \ensuremath{\pm} 0.6} & 15.48 \ensuremath{\pm} 0.1 \\
 & LwF & 50.92 \ensuremath{\pm} 0.4 & 28.92 \ensuremath{\pm} 0.4 & 37.15 \ensuremath{\pm} 0.5 & 18.06 \ensuremath{\pm} 0.4 & 19.41 \ensuremath{\pm} 0.4 & 6.03 \ensuremath{\pm} 0.5 & 5.13 \ensuremath{\pm} 0.3 & 1.02 \ensuremath{\pm} 0.1 \\
 & DS-AL & 56.43 \ensuremath{\pm} 0.2 & 41.76 \ensuremath{\pm} 0.2 & 45.79 \ensuremath{\pm} 0.1 & 34.89 \ensuremath{\pm} 0.3 & 38.17 \ensuremath{\pm} 0.5 & 25.84 \ensuremath{\pm} 0.5 & 23.86 \ensuremath{\pm} 0.8 & 15.02 \ensuremath{\pm} 0.5 \\
\cmidrule(lr){2-10}
 & CIRCLE & 55.32 \ensuremath{\pm} 0.3 & 45.34 \ensuremath{\pm} 0.3 & \underline{55.90 \ensuremath{\pm} 0.3} & \textbf{45.34 \ensuremath{\pm} 0.3} & \textbf{56.97 \ensuremath{\pm} 0.3} & \textbf{45.35 \ensuremath{\pm} 0.3} & \textbf{57.22 \ensuremath{\pm} 0.3} & \textbf{45.34 \ensuremath{\pm} 0.3} \\
\midrule
\multirow{11}{*}{\rotatebox[origin=c]{90}{TinyImageNet}} & AdaGauss & \underline{50.67 \ensuremath{\pm} 0.2} & \underline{37.97 \ensuremath{\pm} 0.4} & \underline{44.46 \ensuremath{\pm} 0.2} & \underline{31.44 \ensuremath{\pm} 0.2} & \underline{37.02 \ensuremath{\pm} 0.4} & 21.94 \ensuremath{\pm} 0.3 & 24.56 \ensuremath{\pm} 0.3 & 9.71 \ensuremath{\pm} 0.2 \\
 & EFC++ & \textbf{52.34 \ensuremath{\pm} 0.3} & \textbf{39.49 \ensuremath{\pm} 0.3} & \textbf{47.51 \ensuremath{\pm} 0.1} & \textbf{35.16 \ensuremath{\pm} 0.4} & 36.24 \ensuremath{\pm} 0.2 & \underline{24.37 \ensuremath{\pm} 0.4} & \underline{25.17 \ensuremath{\pm} 0.1} & \underline{13.95 \ensuremath{\pm} 0.3} \\
 & ADC & 47.54 \ensuremath{\pm} 0.2 & 31.40 \ensuremath{\pm} 0.3 & 37.21 \ensuremath{\pm} 0.1 & 21.14 \ensuremath{\pm} 0.2 & 26.54 \ensuremath{\pm} 0.2 & 13.05 \ensuremath{\pm} 0.3 & 18.59 \ensuremath{\pm} 0.3 & 9.29 \ensuremath{\pm} 0.4 \\
 & ACIL & 39.86 \ensuremath{\pm} 0.3 & 27.26 \ensuremath{\pm} 0.2 & 31.34 \ensuremath{\pm} 0.1 & 20.88 \ensuremath{\pm} 0.2 & 23.52 \ensuremath{\pm} 0.3 & 14.64 \ensuremath{\pm} 0.3 & 16.53 \ensuremath{\pm} 1.2 & 8.47 \ensuremath{\pm} 0.8 \\
 & FeCAM & 48.02 \ensuremath{\pm} 0.3 & 34.33 \ensuremath{\pm} 0.2 & 40.26 \ensuremath{\pm} 0.2 & 27.81 \ensuremath{\pm} 0.2 & 25.56 \ensuremath{\pm} 0.5 & 14.57 \ensuremath{\pm} 0.2 & 16.36 \ensuremath{\pm} 0.4 & 8.14 \ensuremath{\pm} 0.1 \\
 & FeTRIL & 37.02 \ensuremath{\pm} 0.2 & 22.72 \ensuremath{\pm} 0.3 & 28.10 \ensuremath{\pm} 0.6 & 15.37 \ensuremath{\pm} 0.7 & 22.26 \ensuremath{\pm} 0.2 & 11.96 \ensuremath{\pm} 0.3 & 16.11 \ensuremath{\pm} 0.1 & 8.13 \ensuremath{\pm} 0.2 \\
 & IL2A & 37.72 \ensuremath{\pm} 0.3 & 20.30 \ensuremath{\pm} 0.4 & 27.13 \ensuremath{\pm} 0.2 & 9.99 \ensuremath{\pm} 0.4 & 23.94 \ensuremath{\pm} 0.2 & 13.05 \ensuremath{\pm} 0.4 & 16.57 \ensuremath{\pm} 0.4 & 7.39 \ensuremath{\pm} 0.1 \\
 & PASS & 38.71 \ensuremath{\pm} 0.3 & 25.88 \ensuremath{\pm} 0.4 & 31.05 \ensuremath{\pm} 0.9 & 18.38 \ensuremath{\pm} 0.6 & 24.10 \ensuremath{\pm} 0.3 & 12.92 \ensuremath{\pm} 0.4 & 19.57 \ensuremath{\pm} 0.4 & 8.30 \ensuremath{\pm} 0.4 \\
 & LwF & 37.11 \ensuremath{\pm} 0.4 & 20.61 \ensuremath{\pm} 0.4 & 26.37 \ensuremath{\pm} 0.1 & 11.16 \ensuremath{\pm} 0.1 & 18.53 \ensuremath{\pm} 0.2 & 5.96 \ensuremath{\pm} 0.3 & 8.88 \ensuremath{\pm} 0.1 & 2.32 \ensuremath{\pm} 0.1 \\
 & DS-AL & 37.92 \ensuremath{\pm} 0.3 & 26.38 \ensuremath{\pm} 0.2 & 30.66 \ensuremath{\pm} 0.3 & 20.94 \ensuremath{\pm} 0.2 & 22.81 \ensuremath{\pm} 0.5 & 14.22 \ensuremath{\pm} 0.4 & 18.54 \ensuremath{\pm} 0.2 & 10.11 \ensuremath{\pm} 0.2 \\
\cmidrule(lr){2-10}
 & CIRCLE & 39.72 \ensuremath{\pm} 0.2 & 30.77 \ensuremath{\pm} 0.2 & 40.91 \ensuremath{\pm} 0.2 & 30.78 \ensuremath{\pm} 0.2 & \textbf{41.78 \ensuremath{\pm} 0.2} & \textbf{30.77 \ensuremath{\pm} 0.2} & \textbf{42.17 \ensuremath{\pm} 0.2} & \textbf{30.75 \ensuremath{\pm} 0.2} \\
\midrule
\multirow{11}{*}{\rotatebox[origin=c]{90}{ImageNet-Subset}} & AdaGauss & \underline{62.53 \ensuremath{\pm} 0.3} & \underline{45.84 \ensuremath{\pm} 0.4} & \underline{53.76 \ensuremath{\pm} 0.4} & 33.87 \ensuremath{\pm} 0.7 & 33.76 \ensuremath{\pm} 0.3 & 14.28 \ensuremath{\pm} 0.3 & 22.02 \ensuremath{\pm} 0.3 & 6.93 \ensuremath{\pm} 0.3 \\
 & EFC++ & \textbf{67.97 \ensuremath{\pm} 0.4} & \textbf{53.46 \ensuremath{\pm} 0.4} & \textbf{58.54 \ensuremath{\pm} 0.6} & \textbf{40.86 \ensuremath{\pm} 0.6} & \underline{40.45 \ensuremath{\pm} 0.3} & \underline{23.10 \ensuremath{\pm} 0.6} & \underline{28.47 \ensuremath{\pm} 0.6} & 13.92 \ensuremath{\pm} 0.5 \\
 & ADC & 58.36 \ensuremath{\pm} 0.4 & 37.44 \ensuremath{\pm} 0.8 & 49.32 \ensuremath{\pm} 0.4 & 28.62 \ensuremath{\pm} 0.4 & 29.10 \ensuremath{\pm} 0.6 & 12.87 \ensuremath{\pm} 0.1 & 7.42 \ensuremath{\pm} 0.2 & 1.02 \ensuremath{\pm} 0.0 \\
 & ACIL & 59.77 \ensuremath{\pm} 0.2 & 44.46 \ensuremath{\pm} 0.3 & 45.70 \ensuremath{\pm} 0.2 & 32.18 \ensuremath{\pm} 0.3 & 36.60 \ensuremath{\pm} 0.5 & 22.09 \ensuremath{\pm} 0.3 & 26.39 \ensuremath{\pm} 0.6 & 14.96 \ensuremath{\pm} 0.5 \\
 & FeCAM & 58.38 \ensuremath{\pm} 0.6 & 42.32 \ensuremath{\pm} 0.7 & 44.86 \ensuremath{\pm} 0.4 & 29.24 \ensuremath{\pm} 0.6 & 21.39 \ensuremath{\pm} 0.4 & 9.88 \ensuremath{\pm} 0.2 & 27.22 \ensuremath{\pm} 0.4 & 14.20 \ensuremath{\pm} 0.5 \\
 & FeTRIL & 55.60 \ensuremath{\pm} 0.5 & 39.46 \ensuremath{\pm} 0.7 & 44.82 \ensuremath{\pm} 0.2 & 28.94 \ensuremath{\pm} 0.7 & 31.14 \ensuremath{\pm} 0.3 & 16.98 \ensuremath{\pm} 0.1 & 24.02 \ensuremath{\pm} 0.4 & 11.56 \ensuremath{\pm} 0.4 \\
 & IL2A & 48.55 \ensuremath{\pm} 0.7 & 29.52 \ensuremath{\pm} 1.0 & 41.76 \ensuremath{\pm} 0.4 & 23.73 \ensuremath{\pm} 1.4 & 22.85 \ensuremath{\pm} 0.5 & 10.07 \ensuremath{\pm} 0.4 & 13.71 \ensuremath{\pm} 1.4 & 2.11 \ensuremath{\pm} 0.7 \\
 & PASS & 49.72 \ensuremath{\pm} 4.4 & 33.73 \ensuremath{\pm} 2.0 & 40.62 \ensuremath{\pm} 1.8 & 23.18 \ensuremath{\pm} 1.6 & 25.95 \ensuremath{\pm} 2.5 & 8.15 \ensuremath{\pm} 1.2 & 18.87 \ensuremath{\pm} 6.0 & 5.31 \ensuremath{\pm} 1.9 \\
 & LwF & 56.97 \ensuremath{\pm} 0.3 & 29.34 \ensuremath{\pm} 0.9 & 40.89 \ensuremath{\pm} 0.4 & 14.88 \ensuremath{\pm} 0.2 & 18.11 \ensuremath{\pm} 0.4 & 3.74 \ensuremath{\pm} 0.4 & 4.79 \ensuremath{\pm} 0.5 & 0.73 \ensuremath{\pm} 0.2 \\
 & DS-AL & 60.48 \ensuremath{\pm} 0.4 & 45.56 \ensuremath{\pm} 0.5 & 46.68 \ensuremath{\pm} 0.3 & 32.91 \ensuremath{\pm} 0.2 & 37.57 \ensuremath{\pm} 0.3 & 23.02 \ensuremath{\pm} 0.3 & 26.71 \ensuremath{\pm} 0.7 & \underline{15.35 \ensuremath{\pm} 0.5} \\
\cmidrule(lr){2-10}
 & CIRCLE & 52.18 \ensuremath{\pm} 0.4 & 39.68 \ensuremath{\pm} 0.3 & 53.54 \ensuremath{\pm} 0.4 & \underline{39.66 \ensuremath{\pm} 0.3} & \textbf{54.56 \ensuremath{\pm} 0.4} & \textbf{39.65 \ensuremath{\pm} 0.3} & \textbf{54.71 \ensuremath{\pm} 0.4} & \textbf{39.66 \ensuremath{\pm} 0.3} \\
\bottomrule
\end{tabular}
\label{tab:final-seeded-mean-std}
\end{table}


At \(T=10\), drift-compensation methods (EFC++, AdaGauss) perform best across all datasets, as limited task transitions keep drift-estimation errors small. CIRCLE does not match them in mean incremental accuracy \(\bar{A}\), but, despite no image-trained backbone, remains competitive with other cold-start baselines, often matching or exceeding ACIL, FeCAM, FeTRIL, PASS, and LwF on at least one metric. At \(T=20\), CIRCLE’s relative performance improves. It achieves the best final accuracy \(A_T\) on CIFAR-100 and second-best \(\bar{A}\) (just below AdaGauss); and ranks second in \(A_T\) on ImageNet-Subset while staying close in \(\bar{A}\). TinyImageNet is less favourable: EFC++ and AdaGauss lead, but CIRCLE still outperforms the other baselines in \(A_T\) and remains competitive in \(\bar{A}\). 

As the task horizon grows, trained-backbone methods (e.g., EFC++, AdaGauss) degrade sharply, with accuracies dropping by around 10-30 percentage points from $T=10$, potentially due to accumulated drift-estimation errors. Freeze-after-first-task methods also deteriorate, as the initial task likely becomes too small to yield transferable features, inducing representation bias. CIRCLE avoids both failure modes: its feature extractor is fixed and never fitted to the first task, and its SLDA head updates via additive sufficient statistics, making continual training equivalent to batch training. As a direct result, \(A_T\) is invariant to task splits and \(\bar{A}\) slightly improves with \(T\) (averaging artefact), making CIRCLE the strongest method across all datasets at \(T=50\) and \(T=100\).

\textbf{Takeaway}. The comparison shows that relative performance changes with the task horizon. Drift-compensation methods are strongest at short horizons, where few feature-space shifts have occurred. As the number of task transitions grows, their advantage disappears and then reverses. CIRCLE is therefore not a short-horizon universal SOTA method; its advantage is specifically in long-horizon cold-start EFCIL, where avoiding both representation drift and first-task feature bias becomes decisive.

\subsubsection{Training Time and Computation Cost}\label{sec:exp-time}

To assess computation cost, we compare CIRCLE against representative strong baselines from different design families: EFC++ and AdaGauss as trained-backbone drift-compensation methods, ACIL and FeCAM as freeze-after-first-task analytic methods, and FeTRIL as a freeze-after-first-task method with an SGD-trained classification head. Table~\ref{tab:time} reports wall-clock training time and parameter counts at \(T=20\).


\begin{table}[ht]
\centering
\caption{Wall-clock training time at \(T=20\), mean \(\pm\) std over 3 seeds, combined with parameter counts. Times are in minutes, parameter counts are in millions. \textit{Total} denotes the total parameter count, \textit{Learn} denotes the learnable parameter count. Fastest method in \textbf{bold}, second-fastest \underline{underlined}.}
\label{tab:time}
\scriptsize
\setlength{\tabcolsep}{5.5pt}
\begin{tabular}{lccc  ccc  ccc}
\toprule
& \multicolumn{3}{c}{CIFAR-100} & \multicolumn{3}{c}{TinyImageNet} & \multicolumn{3}{c}{ImageNet-Subset} \\
\cmidrule(lr){2-4}\cmidrule(lr){5-7}\cmidrule(lr){8-10}
Method & Time & Total & Learn. & Time & Total & Learn. & Time & Total & Learn. \\
\midrule
ACIL     & \underline{9.27 \ensuremath{\pm} 0.05} & 16.18M & 11.99M & 52.89 \ensuremath{\pm} 0.12 & 17.10M & 12.90M & \underline{87.26 \ensuremath{\pm} 0.43} & 21.20M & 12.81M \\
FeCAM    & 15.53 \ensuremath{\pm} 0.24 & 11.22M & 11.22M & \underline{26.11 \ensuremath{\pm} 0.37} & 11.27M & 11.27M & 107.07 \ensuremath{\pm} 1.70 & 11.22M & 11.22M \\
FeTrIL   & 177.62 \ensuremath{\pm} 8.96 & 11.31M & 11.31M & 560.09 \ensuremath{\pm} 10.46 & 11.37M & 11.37M & 288.70 \ensuremath{\pm} 0.68 & 11.32M & 11.32M \\
AdaGauss & 119.63 \ensuremath{\pm} 4.07 & 11.20M & 11.20M & 200.17 \ensuremath{\pm} 1.57 & 11.20M & 11.20M & 858.50 \ensuremath{\pm} 9.59 & 11.21M & 11.21M \\
EFC++    & 98.39 \ensuremath{\pm} 2.66 & 11.22M & 11.22M & 310.04 \ensuremath{\pm} 5.04 & 11.27M & 11.27M & 390.31 \ensuremath{\pm} 6.44 & 11.23M & 11.23M \\
\midrule
CIRCLE   & \textbf{2.04 \ensuremath{\pm} 0.03} & 11.69M & 7.04M & \textbf{11.51 \ensuremath{\pm} 0.23} & 11.46M & 10.04M & \textbf{39.22 \ensuremath{\pm} 0.17} & 11.05M & 6.48M \\
\bottomrule
\end{tabular}
\end{table}

Despite using multiple reservoirs, CIRCLE is the fastest method on all three datasets: 4.5$\times$ faster than the second-fastest method on CIFAR-100, 2.3$\times$ on TinyImageNet, and 2.2$\times$ on ImageNet-Subset. Relative to the faster drift-compensation baseline on each dataset, the gap is roughly an order of magnitude. CIRCLE also uses fewer learnable parameters than all compared methods, while keeping total parameter count comparable to the ResNet-18-based baselines. Thus, the ensemble is not a hidden training-cost advantage: the additional reservoirs are fixed random draws, not separately trained backbones.

Figure~\ref{fig:duration-accuracy-tradeoff} (Appendix \ref{sec:appendix-wallclocktimefig}) provides an additional view of the accuracy--time trade-off, which shows that CIRCLE can finish the whole stream before some baselines finish early training.

\textbf{Takeaway.} The reservoir ensemble does not make CIRCLE expensive to train. Although CIRCLE may use many fixed reservoir draws, these draws require no optimization. Training is therefore dominated by forward passes and additive SLDA statistic updates, not backbone backpropagation.


\subsection{Long-horizon Behaviour: ImageNet-1k at $T=500$}
\label{sec:exp-imagenet500}

We use ImageNet-1k with \(T=500\) as an extreme long-horizon stress test for CIRCLE. Each task contains only two classes, so the method must accumulate information through many small updates while retaining early classes. This setting directly tests whether fixed data-free reservoir features with analytic SLDA updates remain useful beyond the standard short- and moderate-horizon regimes.

We compare CIRCLE with representative methods from the two main baseline families: EFC++ and AdaGauss as trained-backbone methods, and ACIL, DS-AL, and FeCAM as frozen-feature analytic methods. To complement incremental accuracy, we also track accuracy on the first task \(\mathcal{T}_1\). This gives a metric of whether the earliest class statistics remain useful after hundreds of later updates.

Figure~\ref{fig:imagenet-t500-stream-accuracy} shows that CIRCLE remains stable throughout the stream and achieves the highest incremental accuracy at every task index. Its first-task accuracy never reaches zero, indicating that early-class statistics remain usable even after hundreds of updates, which matches CIRCLE's construction of continual training being equivalent to joint training.

The trained-backbone baselines are less stable in this regime: EFC++ and AdaGauss reach zero first-task accuracy after roughly \(30\) and \(75\) tasks, respectively. They are also difficult to run at this horizon: EFC++ is truncated after a near \(5\)-day budget, while AdaGauss fails after about \(110\) tasks due to numerical instability. Thus, the \(T=500\) experiment supports the main long-horizon claim of CIRCLE: fixed reservoir features with analytic updates remain accurate and practical when the task stream becomes very long.

\begin{figure}[ht]
\centering
\begin{minipage}[t]{0.46\linewidth}
\vspace{0pt}
  \centering
  \includegraphics[width=\linewidth]{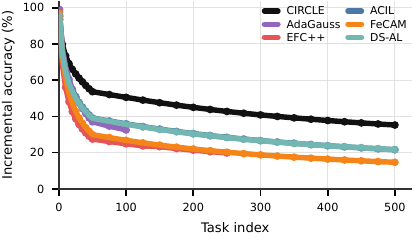}
\end{minipage}\hfill
\begin{minipage}[t]{0.45\linewidth}
\vspace{0pt}
  \centering
  \includegraphics[width=\linewidth]{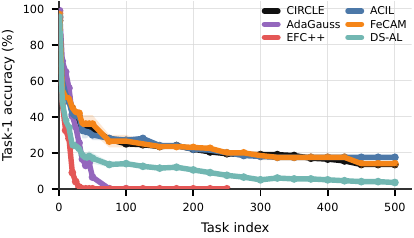}
\end{minipage}

\caption{ImageNet-1k at \(T=500\). Accuracy is tracked over the task stream and averaged over \(2\) seeds (single std shaded). EFC++ is truncated after a near \(5\)-day budget, and AdaGauss after failing at \(\sim110\) tasks due to numerical instability. Left: incremental test accuracy. Right: mean accuracy tracked for the first task, showing how quickly each method forgets the start of the stream.}
\label{fig:imagenet-t500-stream-accuracy}
\end{figure}




\textbf{Takeaway.} CIRCLE remains stable in very long task horizons, outperforming both frozen- and trained- backbone baselines, while trained-backbone baselines collapse in such a regime.

\subsection{Ablations}
\label{sec:exp-ablations}

We isolate the contributions of CIRCLE's three main design choices: the reservoir feature extractor, the analytic classification head, and the two-level ensembling scheme. All ablations are performed at the \(T=20\) split on each dataset, using \(5\) seeds. Unless stated otherwise, all components not under study are kept fixed to the default CIRCLE configuration. Hyperparameters specific to each variant, including feature- and prediction-ensembling sizes, are tuned separately for every variant using the same protocol as in the main experiments.

\subsubsection{Feature Extractor}
\label{sec:exp-ablation-extractor}

We first evaluate whether BiRC2D is essential, or whether similar performance can be obtained with other non-trained feature extractors. Therefore, we replace the BiRC2D extractor with six alternatives:
(i) a randomly-initialised ResNet-18 with BatchNorm, matching the architecture used by the trained-backbone baselines;
(ii) a randomly-initialised ResNet-18 with LayerNorm instead of BatchNorm, controlling for sensitivity of BatchNorm at random initialisation;
(iii) a randomly-initialised VGG-13, a comparable-capacity convolutional architecture without normalization layers;
(iv) Patch-RNN, a natural extension of pixel-flatten-into-ESN approaches~\citep{LopezOrtiz2024}, where the image is divided into patches and processed sequentially by a single ESN;
(v) Conv+Patch-RNN, which adds a small fixed random convolutional stack before the Patch-RNN, in the spirit of \citet{chang2019reinforcementlearningconvolutionalreservoir};
and (vi) the scattering transform, an untrained, theory-driven CNN-style filter bank~\citep{DBLP:journals/corr/abs-1203-1513}. Additional details on these extractors are given in Table~\ref{tab:app-feature-extractor-baselines}.

For each extractor, we keep the rest of the CIRCLE pipeline unchanged: features are ensembled, followed by SLDA heads, and combined through prediction ensembling. Ensembling sizes and extractor-specific hyperparameters are tuned per variant. Table~\ref{tab:ablation-extractor} reports the results at \(T=20\) on all three benchmarks.

\begin{table}[ht]
\centering
\small
\caption{Feature extractor ablation at $T=20$ on CIFAR-100, TinyImageNet, and ImageNet-Subset. All extractors are non-trained and combined with SLDA heads; ensembling sizes and any extractor-specific hyperparameters are tuned per variant. Mean $\pm$ std over 5 seeds.}
\label{tab:ablation-extractor}
\scriptsize
\begin{tabular}{lcccccc}
\toprule
 & \multicolumn{2}{c}{CIFAR-100} & \multicolumn{2}{c}{TinyImageNet} & \multicolumn{2}{c}{ImageNet-Subset} \\
\cmidrule(lr){2-3}\cmidrule(lr){4-5}\cmidrule(lr){6-7}
Extractor & $\bar A$& $A_T$ & $\bar A$& $A_T$ &$\bar A$& $A_T$  \\
\midrule
Random ResNet-18 (BN) & 39.5 \ensuremath{\pm} 0.2 & \underline{31.0 \ensuremath{\pm} 0.2} & 18.4 \ensuremath{\pm} 0.3 & 11.7 \ensuremath{\pm} 0.1 & 38.7 \ensuremath{\pm} 0.4 & \underline{28.3 \ensuremath{\pm} 0.5} \\
Random ResNet-18 (LN) & \underline{40.2 \ensuremath{\pm} 0.2} & 28.8 \ensuremath{\pm} 0.2 & 21.5 \ensuremath{\pm} 0.2 & 14.0 \ensuremath{\pm} 0.1 & \underline{41.1 \ensuremath{\pm} 0.3} & 27.4 \ensuremath{\pm} 0.3 \\
Random VGG-13 & 40.0 \ensuremath{\pm} 0.2 & 28.3 \ensuremath{\pm} 0.2 & \underline{26.8 \ensuremath{\pm} 0.2} & \underline{17.5 \ensuremath{\pm} 0.2} & 38.5 \ensuremath{\pm} 0.2 & 24.2 \ensuremath{\pm} 0.2 \\
Patch-RNN & 21.4 \ensuremath{\pm} 0.1 & 13.6 \ensuremath{\pm} 0.1 & 9.7 \ensuremath{\pm} 0.1 & 5.0 \ensuremath{\pm} 0.1 & 12.7 \ensuremath{\pm} 0.2 & 5.9 \ensuremath{\pm} 0.1 \\
Conv + Patch-RNN & 26.9 \ensuremath{\pm} 0.1 & 17.4 \ensuremath{\pm} 0.1 & 12.6 \ensuremath{\pm} 0.1 & 6.5 \ensuremath{\pm} 0.1 & 15.7 \ensuremath{\pm} 0.4 & 7.3 \ensuremath{\pm} 0.3 \\
Scattering transform & 34.7 \ensuremath{\pm} 0.0 & 22.9 \ensuremath{\pm} 0.0 & 23.2 \ensuremath{\pm} 0.0 & 14.4 \ensuremath{\pm} 0.0 & 28.9 \ensuremath{\pm} 0.0 & 16.2 \ensuremath{\pm} 0.1 \\
\midrule
BiRC2D (CIRCLE) & \textbf{55.9 \ensuremath{\pm} 0.3} & \textbf{45.3 \ensuremath{\pm} 0.3} & \textbf{40.9 \ensuremath{\pm} 0.2} & \textbf{30.8 \ensuremath{\pm} 0.2} & \textbf{53.5 \ensuremath{\pm} 0.4} & \textbf{39.7 \ensuremath{\pm} 0.3} \\
\bottomrule
\end{tabular}
\end{table}


%

The results show that BiRC2D is the only tested untrained extractor that consistently supports strong cold-start EFCIL. Random CNNs produce non-trivial but much weaker features, simple Patch-RNN variants show that sequentializing image patches is insufficient, and scattering features remain below BiRC2D despite being the strongest non-reservoir alternative. Thus, CIRCLE's gains come from spatially structured reservoir features, not merely from using any fixed random representation.

\subsubsection{Classification Head}
\label{sec:exp-ablation-head}

We ablate the classification head while keeping the BiRC2D feature extractor fixed. We compare SLDA with several streaming analytic alternatives: Euclidean NCM, cosine NCM, ridge regression with recursive least squares (RLS), diagonal LDA, a FeCAM-style Mahalanobis classifier, and QDA with class-specific covariance estimates. These heads are described in the Appendix \ref{sec:appendix-headablation}. 

SLDA is consistently the strongest analytic head; full results are in Table~\ref{tab:ablation-head}. Prototype heads lose second-order information, diagonal LDA is too restrictive, and class-specific covariance methods such as FeCAM-Mahalanobis and QDA are less stable because per-class covariance estimation is data-hungry under cold start. Shared-covariance SLDA pools covariance across classes and gives the best balance between second-order modeling and statistical stability.

\begin{table}[ht]
\scriptsize
\centering
\caption{Classification head ablation at \(T=20\) on CIFAR-100, TinyImageNet, and ImageNet-Subset. All heads are paired with the same BiRC2D feature extractor and ensembling scheme. Mean \(\pm\) std over 5 seeds.}
\label{tab:ablation-head}
\begin{tabular}{lcccccc}
\toprule
 & \multicolumn{2}{c}{CIFAR-100} & \multicolumn{2}{c}{TinyImageNet} & \multicolumn{2}{c}{ImageNet-Subset} \\
\cmidrule(lr){2-3}\cmidrule(lr){4-5}\cmidrule(lr){6-7}
Head & \(\bar{A}\) & \(A_T\) & \(\bar{A}\) & \(A_T\) & \(\bar{A}\) & \(A_T\) \\
\midrule
Euclidean NCM & 23.56 \ensuremath{\pm} 0.2 & 14.96 \ensuremath{\pm} 0.1 & 14.09 \ensuremath{\pm} 0.1 & 7.91 \ensuremath{\pm} 0.0 & 25.29 \ensuremath{\pm} 0.7 & 13.83 \ensuremath{\pm} 0.4 \\
Cosine NCM & 23.48 \ensuremath{\pm} 0.2 & 14.90 \ensuremath{\pm} 0.1 & 14.06 \ensuremath{\pm} 0.0 & 7.90 \ensuremath{\pm} 0.0 & 25.26 \ensuremath{\pm} 0.7 & 13.81 \ensuremath{\pm} 0.4 \\
Ridge RLS & 50.97 \ensuremath{\pm} 0.1 & 37.76 \ensuremath{\pm} 0.2 & 13.60 \ensuremath{\pm} 7.4 & 5.49 \ensuremath{\pm} 3.3 & 48.55 \ensuremath{\pm} 0.1 & 32.96 \ensuremath{\pm} 0.4 \\
Diagonal LDA & 24.72 \ensuremath{\pm} 0.1 & 15.90 \ensuremath{\pm} 0.0 & 15.19 \ensuremath{\pm} 0.1 & 8.63 \ensuremath{\pm} 0.1 & 25.92 \ensuremath{\pm} 0.8 & 14.31 \ensuremath{\pm} 0.6 \\
FeCAM Mahalanobis & 49.36 \ensuremath{\pm} 0.1 & 39.40 \ensuremath{\pm} 0.1 & 27.41 \ensuremath{\pm} 0.9 & 19.34 \ensuremath{\pm} 0.7 & 21.58 \ensuremath{\pm} 2.2 & 11.18 \ensuremath{\pm} 1.4 \\
QDA (class-spec.\ cov) & 48.04 \ensuremath{\pm} 0.1 & 37.58 \ensuremath{\pm} 0.2 & 30.78 \ensuremath{\pm} 0.3 & 21.89 \ensuremath{\pm} 0.3 & 34.51 \ensuremath{\pm} 1.2 & 21.14 \ensuremath{\pm} 0.7 \\
\midrule
SLDA (CIRCLE) & \textbf{55.9 \ensuremath{\pm} 0.3} & \textbf{45.3 \ensuremath{\pm} 0.3} & \textbf{40.9 \ensuremath{\pm} 0.2} & \textbf{30.8 \ensuremath{\pm} 0.2} & \textbf{53.5 \ensuremath{\pm} 0.4} & \textbf{39.7 \ensuremath{\pm} 0.3} \\
\bottomrule
\end{tabular}
\end{table}

\subsubsection{Ensembling: Feature Dimension vs.\ Prediction Dimension}
\label{sec:exp-ablation-ensembling}
CIRCLE uses a fixed budget of \(n=km\) reservoir instances, partitioned into \(k\) groups of size \(m\). The group size \(m\) controls feature ensembling: the \(m\) reservoir embeddings within each group are concatenated and passed to one SLDA head. The number of groups \(k\) controls prediction ensembling: the \(k\) independently trained heads are averaged at the probability level. Thus, increasing \(m\) gives each head a richer representation, while increasing \(k\) increases prediction-level diversity. We ablate both axes on CIFAR-100 at \(T=20\) to verify that the two ensembling mechanisms are complementary and that balanced groupings perform best under a fixed reservoir budget.


\begin{figure}[ht]
\centering
\begin{minipage}[t]{0.35\linewidth}
 \vspace{0pt}
  \centering
  \includegraphics[width=\linewidth]{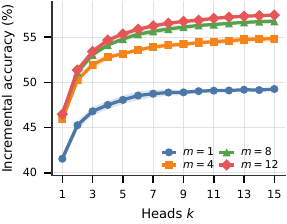}
\end{minipage}\hfill
\begin{minipage}[t]{0.35\linewidth}
\vspace{0pt}
  \centering
  \includegraphics[width=\linewidth]{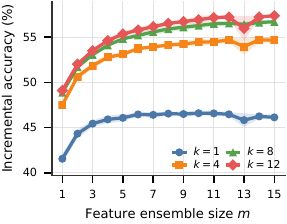}
\end{minipage}\hfill
\begin{minipage}[t]{0.29\linewidth}
\vspace{27pt}
  \centering
  \begingroup
  \small
  \setlength{\tabcolsep}{3.5pt}
  \scriptsize
  \begin{tabular}{ccc}
  \toprule
  $m$ & $k$ & \(\bar{A}\) (\%) \\
  \midrule
  4 & 4 & \textbf{52.97 \(\pm\) 0.31} \\
  1 & 16 & 49.35 \(\pm\) 0.11 \\
  16 & 1 & 45.37 \(\pm\) 0.99 \\
  \bottomrule
  \end{tabular}
  \endgroup
\end{minipage}
\caption{Ensembling ablation on CIFAR-100 at \(T=20\). Left: varying the number of prediction heads \(k\) for fixed feature-ensembling sizes \(m\). Right: varying \(m\) for fixed \(k\). 
The table compares three allocations with the same total reservoir budget \(mk=16\).
}
\label{fig:ablation-ensembling}
\end{figure}



Figure~\ref{fig:ablation-ensembling} shows that both ensembling axes contribute. For fixed feature-ensembling size \(m\), increasing the number of heads \(k\) improves incremental accuracy; for fixed \(k\), increasing \(m\) also improves performance. The two are complementary: feature ensembling enriches each head’s representation, while prediction ensembling reduces variance across independently drawn reservoir groups.

The fixed-budget comparison confirms the axes are not interchangeable. With the same total budget \(mk=16\), the balanced setting \(m=4,k=4\) outperforms both \(m=1,k=16\) and \(m=16,k=1\). Many small heads leave each classifier under-represented, while a single large feature vector removes the benefit of prediction averaging. A balanced allocation thus provides the best trade-off between representation quality and ensemble diversity, consistent with our theoretical results (Appendix~\ref{sec:appendix-theory}).

\textbf{Takeaway from ablations.} CIRCLE's superior performance is not explained by any single component such as freezing, random features, or ensembling. CIRCLE needs all three design choices: a spatially structured reservoir feature extractor, an SLDA head, and a balanced allocation between feature and prediction ensembling.


\section{Discussion and Limitations}
\label{sec:limits}

CIRCLE targets a specific regime: long-horizon cold-start exemplar-free class-incremental learning. It is not intended as a universal continual learning method. In short-horizon settings, where only a small number of distribution shifts occur, trained-backbone drift-compensation methods such as EFC++ and AdaGauss outperform CIRCLE. CIRCLE's advantage appears when the task horizon is long enough that repeated representation updates and drift corrections become unreliable or expensive, and no sufficient data exists for strong initial representation learning. 

However, the method is designed primarily for low-cost streaming training, not necessarily minimal single-sample inference latency. At inference time, it evaluates multiple fixed reservoir instances and multiple SLDA heads; this cost is parallelizable and can be amortized by batching, but latency may be higher than single-backbone or single-head methods in small-batch or memory-constrained settings. The SLDA parameters can be materialized after any update, so the head computation is a set of linear scores; the dominant inference cost is the reservoir ensemble. Reducing this cost through ensemble compression or distillation into a smaller student model is a promising future direction.


Our results show that not all data-free features are sufficient. Random CNNs, simple Patch-RNN reservoirs and scattering features do not match the adapted BiRC2D extractor. Thus, CIRCLE depends on a spatially structured reservoir architecture and careful tuning of reservoir dimensionality and ensemble allocation. Extending CIRCLE to other modalities or higher-resolution domains may require similarly structured data-free extractors rather than direct reuse of the current architecture.

The theoretical analysis of grouped ensembling should be interpreted as motivation rather than a complete theory of BiRC2D reservoirs. Proposition~\ref{prop-biasvar} formalizes a bias--variance tradeoff under independence and approximation assumptions, and it explains why intermediate group sizes can be preferable to pure feature concatenation or pure prediction averaging. However, the approximation condition is stylized and is not a proof that the actual BiRC2D feature distribution satisfies a particular rate. The empirical ensemble ablation therefore remains essential evidence for the grouping design.

Finally, the ImageNet-1k $T=500$ experiment is a stress test rather than a fully exhaustive benchmark. We run two seeds because trained-backbone baselines are very expensive at this horizon, and some baselines are truncated or fail numerically before completing the full stream. We report these failures explicitly because they are part of the long-horizon behaviour under study, but the results should be interpreted together with the broader $T \in \{10,20,50,100\}$ evaluation across three datasets.

\section*{Acknowledgement}

Augustinas Jučas and Yangchen Pan acknowledges the use of resources provided by the Isambard-AI National AI Research Resource (AIRR). Isambard-AI is operated by the University of Bristol and is funded by the UK Government's Department for Science, Innovation and Technology (DSIT) via UK Research and Innovation; and the Science and Technology Facilities Council [ST/AIRR/I-A-I/1023] \citep{mcintoshsmith2024isambardai}. Yangchen Pan acknowledge the support from the Engineering and Physical Sciences Research Council (EPSRC) New Investigator Award under grant reference UKRI2775. 
{
\small
\bibliography{references}
}

\newpage
\appendix
\onecolumn
The appendix includes the following contents. 

\begin{enumerate}
    \item Section \ref{sec:appendix-circle}: provides complete description of our proposed CIRCLE method.
    \item Section \ref{sec:appendix-theory}: Proposition \ref{prop-biasvar} characterizes the bias-variance trade-off of our two-level ensemble design introduced in Section \ref{sec:method:ensembling}, along with its detailed proof. 
    \item Section \ref{sec:appendix-experiment-details}: provides additional details and results on the experiments.
    \item Section \ref{sec:hyperparameters}: discusses and lists the hyperparameters for the main experiment.
\end{enumerate}

\section{CIRCLE: Additional Details}
\label{sec:appendix-circle}

Reservoir computing uses a fixed recurrent system to map signal inputs into a
high-dimensional feature space, while only the readout layer is trained \cite{jaeger2001echo, maass2002real}. Most reservoirs, including the canonical instantiation known as the Echo State Network (ESN) \cite{jaeger2001echo}, are suited to sequential data, but images are
two-dimensional objects. A simplistic conversion of an image into a one-dimensional pixel
stream to be passed into an ESN destroys the local spatial structure, which is why simple flattened
reservoir classifiers \cite{LopezOrtiz2024} are weak on non-trivial image benchmarks, as we show in an ablation later in this study. Therefore, more sophisticated reservoir architectures are required for image data. However, the literature on such models is very limited. To the best of our knowledge, the most suitable and relevant reservoir architecture for our work is BiRC2D, introduced by \citet{nakanishi2024birc2d}. 

\paragraph{BiRC2D description.} We therefore base our feature extractor on BiRC2D~\cite{nakanishi2024birc2d} -- a
bidirectional two-dimensional reservoir architecture designed for image data. The
model first divides the input image into patches of a fixed size and flattens the
pixels within each patch into a feature vector, producing a feature map whose
\textit{spatial locations} correspond to patches rather than to raw pixels. This
feature map is then passed through a stack of BiRC2D \textit{layers}, each of which
is a map $\mathbb{R}^{H \times W \times C_{\mathrm{in}}} \to \mathbb{R}^{H \times W \times C_{\mathrm{out}}}$
that gives every spatial location access to contextual information from multiple
directions. In particular, a single layer reshapes its input into sequences along
four directions -- left-to-right, right-to-left, top-to-bottom, and bottom-to-top --
runs each sequence through a fixed ESN reservoir, and concatenates the four directional
outputs along the channel dimension to recover a spatial map. To improve spatial
awareness, the full model runs several such stacks in parallel, each operating on a
different patch size, and concatenates their outputs at the end.

\paragraph{Our additions.} The original BiRC2D model was proposed for image anomaly detection, with the output of the model keeping the spatial $H \times W$ feature map structure, required by the context of the original study. However, our task is
different: we require a single image-level embedding \textit{vector} (not a 2D grid) for class-incremental
classification. Furthermore, our study concerns general images, as opposed to analyzing only spectral imaging in \cite{nakanishi2024birc2d}. Based on these differences, as well as empirical experimentation, we adapt the architecture in four ways. 
First, before applying the reservoir blocks, we pass the image through a stack of small and \textit{random} convolutional layers, inspired by \citet{8545471}. This gives the reservoir local low-level features rather
than raw pixels. Second, we use ReLU activations inside the reservoir blocks instead
of hyperbolic tangent activations. Third, we initialize the fixed reservoir weights
using Kaiming uniform initialization, matching the ReLU nonlinearity, as opposed to using Gaussian initialisations. Finally, after
the final reservoir block, we concatenate across all spatial locations to obtain a single vector and apply
a fixed random upscaling layer to obtain a high-dimensional embedding, similar to how it is done in RanPAC \cite{mcdonnell2024ranpacrandomprojectionspretrained} and ACIL \cite{zhuang2022acilanalyticclassincrementallearning}.

\paragraph{Multiple seeds.} We denote $\phi_s : \mathcal{X} \rightarrow \mathbb{R}^{d_r}$
to be the reservoir feature extractor instantiated with random seed $s$. The
seed determines all fixed random weights in the convolutional stack, reservoir blocks,
and final upscaling layer. No parameter of $\phi_s$ is trained.

\paragraph{Our model in detail.}
Algorithms ~\ref{alg:circle-train}, ~\ref{alg:circle-predict} and ~\ref{alg:circle-materialise} give the training and prediction procedures of CIRCLE. Training takes a stream $\mathcal{S} = \{(x_t, y_t)\}_{t=1}^{N}$ presented in any order. Given $n$ reservoir seeds, CIRCLE partitions them into $k$ disjoint groups $\mathcal{G}=\{G_i\}_{i=1}^{k}$, with $|G_i|=m=n/k$. The algorithms are written for single samples; in practice, feature extraction and accumulator updates are vectorised over minibatches.

\begin{algorithm}[ht]
\caption{CIRCLE -- training}
\label{alg:circle-train}
\begin{algorithmic}[1]
\Require Stream $\mathcal{S} = \{(x_t, y_t)\}_{t=1}^{N}$
\Require Reservoir groups $\mathcal{G}=\{G_i\}_{i=1}^{k}$, with $|G_i|=m$
\Statex
\For{$i = 1,\ldots,k$}
    \State Initialise head $h_i$
    \Statex \quad $N_{i,c} \leftarrow 0$, $T_{i,c} \leftarrow \mathbf{0}$ for all classes $c$
    \Statex \quad $M_i \leftarrow \mathbf{0}$
\EndFor
\Statex
\For{$(x,y) \in \mathcal{S}$}
    \For{$i = 1,\ldots,k$}
        \State $z_i \leftarrow \Phi_{G_i}(x)$ \Comment{$\Phi_{G_i}(x)=\Vert_{s\in G_i}\phi_s(x)$}
        \State $N_{i,y} \leftarrow N_{i,y}+1$
        \State $T_{i,y} \leftarrow T_{i,y}+z_i$
        \State $M_i \leftarrow M_i+z_i z_i^\top$
    \EndFor
\EndFor
\end{algorithmic}
\end{algorithm}

\begin{algorithm}[ht]
\caption{CIRCLE -- prediction}
\label{alg:circle-predict}
\begin{algorithmic}[1]
\Require Test sample $x$
\Require Reservoir groups $\mathcal{G}=\{G_i\}_{i=1}^{k}$
\Require Heads $\{h_i\}_{i=1}^{k}$ with accumulators $\{N_{i,\cdot},T_{i,\cdot},M_i\}_{i=1}^{k}$
\Statex
\For{$i = 1,\ldots,k$}
    \State Materialise $\{\mu_{i,c}\}$, $\Sigma_{i,\lambda}$, and $\{\pi_{i,c}\}$ from $(N_{i,\cdot},T_{i,\cdot},M_i)$ \Comment{Use Alg. \ref{alg:circle-materialise}}
    \State $z_i \leftarrow \Phi_{G_i}(x)$ \Comment{$\Phi_{G_i}(x)=\Vert_{s\in G_i}\phi_s(x)$}
    \For{each observed class $c$}
        \State $\ell_{i,c} \leftarrow [h_i(z_i)]_c$ 
        \Comment{$\ell_{i,c} \leftarrow z_i^\top \Sigma_{i,\lambda}^{-1}\mu_{i,c}
    -\frac{1}{2}\mu_{i,c}^\top \Sigma_{i,\lambda}^{-1}\mu_{i,c}
    +\log \pi_{i,c}$}
    \EndFor
    \State $p_i \leftarrow \mathrm{softmax}(\ell_i)$
\EndFor
\State \Return $\hat{p}(x)=\frac{1}{k}\sum_{i=1}^{k}p_i$
\end{algorithmic}
\end{algorithm}

\begin{algorithm}[ht]
\caption{\textsc{Materialise}: obtaining closed-form LDA classifier from accumulated statistics}
\label{alg:circle-materialise}
\begin{algorithmic}[1]
\Require Head accumulators $(n_{c}, T_{c}, M)$ for $c$ in observed classes
\State $N \leftarrow \sum_c n_c$;\quad $C \leftarrow |\{c : n_c > 0\}|$
\State $\mu_c \leftarrow T_c / n_c$ \Comment{class means}
\State $\pi_c \leftarrow n_c / N$ \Comment{class priors}
\State $S \leftarrow M - \sum_c n_c\,\mu_c \mu_c^\top$ \Comment{pooled within-class scatter}
\State $\Sigma_\lambda \leftarrow S/(N-C) + \lambda I$ \Comment{regularised covariance}
\State \Return $\{\mu_c\}, \Sigma_\lambda, \{\pi_c\}$
\end{algorithmic}
\end{algorithm}

\section{Proof for Proposition \ref{prop-biasvar}}
\textbf{Definitions and notations}. Let $x$ be some arbitrarily fixed input and $\eta(x)\in\Delta^{C}$ denote the target conditional label distribution over $C$ classes, where
$\Delta^{C}=\{p\in\mathbb{R}^{C}:p_c\geq 0,\ \sum_{c=1}^{C}p_c=1\}$.
Let $n=km$ be the total number of reservoir instantiations, partitioned into
$k$ disjoint groups $G_1,\ldots,G_k$, each of size $m$. For each group $G_j$, define
$z_j(x) \defeq \big\Vert_{s\in G_j}\phi_s(x)$, where $\phi_s$ is the fixed random reservoir feature extractor instantiated with seed $s$, and $\Vert$ denotes feature concatenation. Let $h_j$ be the classification head trained on
the features from group $G_j$, and define the group-level predicted probability vector
$p_j^{(m)}(x) \defeq \operatorname{softmax}\!\left(h_j(z_j(x))\right)\in \Delta^{C}$.
The grouped ensemble prediction is $\bar p_{k,m}(x)\defeq\frac{1}{k}\sum_{j=1}^{k}p_j^{(m)}(x)$. Define mean prediction $\mu_m(x)=\mathbb{E}\!\left[p_j^{(m)}(x)\right]$, where the expectation is over the random reservoir initializations. 


\begin{restatable}{proposition}{biasvarianceprop}[Bias-variance decomposition for grouped reservoir ensembles]\label{prop-biasvar}
Assume that, conditional on the training data, the random vectors
$p_1^{(m)}(x),\ldots,p_k^{(m)}(x)$ are independent and identically distributed with respect to the reservoir initialization randomness. Define the squared bias $B_m(x)\defeq \left\|\mu_m(x)-\eta(x)\right\|_2^2$. 
Then
\begin{equation}\label{eq:ensemble-mse-decompose}
\mathbb{E}\!\left[\left\|\bar p_{k,m}(x)-\eta(x)\right\|_2^2\right] \leq B_m(x) + \frac{m}{n}.
\end{equation}

Furthermore, if the group-level squared bias obeys the approximation condition
\begin{equation}\label{eq:approximation-condition}
    B_m(x)\leq A m^{-2\alpha}
\end{equation}
for some constants $A>0$ and $\alpha>0$, then
\[
    \mathbb{E}\!\left[
        \left\|\bar p_{k,m}(x)-\eta(x)\right\|_2^2
    \right]
    \leq
    A m^{-2\alpha}
    +
    \frac{m}{n}.
\]
The right-hand side is minimized over positive real $m$ at $m^\star=\left(2\alpha A n\right)^{\frac{1}{2\alpha+1}}$.

Thus, under the approximation condition above, the bound is optimized by an intermediate
group size rather than necessarily by either $m=1$ or $m=n$.
\end{restatable}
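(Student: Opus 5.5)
The argument is a standard bias–variance decomposition followed by a one-variable calculus minimization; I will carry out three steps in order.

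\textbf{Step 1: decompose the error.} Fix $x$ and condition on the training data throughout. Write
\[
\bar p_{k,m}(x)-\eta(x)=\big(\bar p_{k,m}(x)-\mu_m(x)\big)+\big(\mu_m(x)-\eta(x)\big).
\]
Expand the squared norm. The cross term has expectation zero, because $\mathbb{E}[\bar p_{k,m}(x)]=\mu_m(x)$ by identical distribution. Hence
\[
\mathbb{E}\|\bar p_{k,m}(x)-\eta(x)\|_2^2 = B_m(x)+\mathbb{E}\|\bar p_{k,m}(x)-\mu_m(x)\|_2^2 .
\]

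\textbf{Step 2: bound the variance term.} Set $V_j=p_j^{(m)}(x)-\mu_m(x)$. These are independent, mean zero and identically distributed, so the cross terms $\mathbb{E}\langle V_i,V_j\rangle$ vanish for $i\neq j$. This gives
\[
\mathbb{E}\|\bar p_{k,m}(x)-\mu_m(x)\|_2^2=\frac{1}{k}\,\mathbb{E}\|V_1\|_2^2 .
\]
It remains to show $\mathbb{E}\|V_1\|_2^2\le 1$. We have $\mathbb{E}\|V_1\|_2^2=\mathbb{E}\|p_1\|_2^2-\|\mu_m\|_2^2\le \mathbb{E}\|p_1\|_2^2$. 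Since $p_1\in\Delta^C$ has nonnegative entries summing to one, $\|p_1\|_2^2=\sum_c p_c^2\le\sum_c p_c=1$. (One could sharpen this to $\le 1-1/C$, but that is not needed.) Using $1/k=m/n$, the variance term is at most $m/n$, which proves \eqref{eq:ensemble-mse-decompose}. This step is the only one with any content: it needs independence to kill the cross terms and boundedness of the simplex to produce the constant $1$.

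\textbf{Step 3: optimize under the approximation condition.} Substitute \eqref{eq:approximation-condition} into \eqref{eq:ensemble-mse-decompose} to get the bound $f(m)=Am^{-2\alpha}+m/n$ on $m>0$. The function $f$ is strictly convex, since $f''(m)=2\alpha(2\alpha+1)Am^{-2\alpha-2}>0$. It tends to $+\infty$ both as $m\to 0^+$ and as $m\to\infty$. Setting $f'(m)=-2\alpha A m^{-2\alpha-1}+1/n=0$ gives $m^{2\alpha+1}=2\alpha A n$, so the unique minimizer is $m^\star=(2\alpha A n)^{1/(2\alpha+1)}$.

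For the concluding remark, note that $m^\star$ grows only sublinearly in $n$. So for large $n$ it lies strictly between $1$ and $n$, and by convexity the bound at $m^\star$ (or at integers near it) beats both endpoints. The paper's wording "not necessarily" means I only need to give this interpretive remark, not prove a strict claim for every $n$.
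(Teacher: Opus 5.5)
Your proposal is correct and follows essentially the same route as the paper: the same split $\bar p-\eta=(\bar p-\mu_m)+(\mu_m-\eta)$ with a vanishing cross term, and independence killing the off-diagonal terms to give $\sigma_m^2/k$. The rest also matches: the simplex bound $\sigma_m^2=\mathbb{E}\|p_j\|_2^2-\|\mu_m\|_2^2\le 1$ (you use $\sum_c p_c^2\le\sum_c p_c$ where the paper uses $\le(\sum_c p_c)^2$, an inconsequential variation) and the same first- and second-derivative minimization of $Am^{-2\alpha}+m/n$.
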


\begin{proof}
Fix $x$ throughout the proof and suppress the explicit dependence on $x$ to simplify
notation. Write
\[
    p_j = p_j^{(m)}(x),
    \qquad
    \bar p = \bar p_{k,m}(x),
    \qquad
    \mu_m = \mathbb{E}[p_j],
    \qquad
    \eta = \eta(x).
\]
By definition,
\[
    \bar p
    =
    \frac{1}{k}\sum_{j=1}^{k}p_j.
\]
We decompose the prediction error as
\[
    \bar p-\eta
    =
    (\bar p-\mu_m)+(\mu_m-\eta).
\]
Taking squared Euclidean norm gives
\[
    \|\bar p-\eta\|_2^2
    =
    \|\bar p-\mu_m\|_2^2
    +
    \|\mu_m-\eta\|_2^2
    +
    2\langle \bar p-\mu_m,\mu_m-\eta\rangle.
\]
Taking expectation over the reservoir randomness,
\[
    \mathbb{E}\|\bar p-\eta\|_2^2
    =
    \mathbb{E}\|\bar p-\mu_m\|_2^2
    +
    \|\mu_m-\eta\|_2^2
    +
    2\left\langle
        \mathbb{E}[\bar p-\mu_m],
        \mu_m-\eta
    \right\rangle.
\]
Now
\[
    \mathbb{E}[\bar p]
    =
    \mathbb{E}\left[\frac{1}{k}\sum_{j=1}^{k}p_j\right]
    =
    \frac{1}{k}\sum_{j=1}^{k}\mathbb{E}[p_j]
    =
    \mu_m,
\]
so
\[
    \mathbb{E}[\bar p-\mu_m]=0.
\]
Therefore the cross term vanishes, and
\[
    \mathbb{E}\|\bar p-\eta\|_2^2
    =
    \mathbb{E}\|\bar p-\mu_m\|_2^2
    +
    \|\mu_m-\eta\|_2^2.
\]
By definition,
\[
    \|\mu_m-\eta\|_2^2
    =
    B_m.
\]

It remains to compute the variance term. Define
\[
    \xi_j = p_j-\mu_m.
\]
Then
\[
    \mathbb{E}[\xi_j]
    =
    \mathbb{E}[p_j-\mu_m]
    =
    \mathbb{E}[p_j]-\mu_m
    =
    0,
\]
and
\[
    \bar p-\mu_m
    =
    \frac{1}{k}\sum_{j=1}^{k}\xi_j.
\]
Therefore
\[
    \mathbb{E}\|\bar p-\mu_m\|_2^2
    =
    \mathbb{E}\left\|
        \frac{1}{k}\sum_{j=1}^{k}\xi_j
    \right\|_2^2.
\]
Expanding the squared norm,
\[
    \mathbb{E}\|\bar p-\mu_m\|_2^2
    =
    \frac{1}{k^2}
    \mathbb{E}\left[
        \left\langle
            \sum_{i=1}^{k}\xi_i,
            \sum_{j=1}^{k}\xi_j
        \right\rangle
    \right]
    =
    \frac{1}{k^2}
    \sum_{i=1}^{k}\sum_{j=1}^{k}
    \mathbb{E}\left[\langle \xi_i,\xi_j\rangle\right].
\]
For $i=j$,
\[
    \mathbb{E}\left[\langle \xi_i,\xi_i\rangle\right]
    =
    \mathbb{E}\|\xi_i\|_2^2
    =
    \mathbb{E}\|p_i-\mu_m\|_2^2
    \defeq
    \sigma_m^2,
\]
which is the single-group prediction variance at input $x$. 
For $i\neq j$, independence of $p_i$ and $p_j$ implies independence of $\xi_i$ and
$\xi_j$. Hence
\[
    \mathbb{E}\left[\langle \xi_i,\xi_j\rangle\right]
    =
    \mathbb{E}\left[\sum_{c=1}^{C}\xi_{i,c}\xi_{j,c}\right]
    =
    \sum_{c=1}^{C}
    \mathbb{E}[\xi_{i,c}\xi_{j,c}].
\]
By independence,
\[
    \mathbb{E}[\xi_{i,c}\xi_{j,c}]
    =
    \mathbb{E}[\xi_{i,c}]\mathbb{E}[\xi_{j,c}]
    =
    0\cdot 0
    =
    0.
\]
Therefore,
\[
    \mathbb{E}\left[\langle \xi_i,\xi_j\rangle\right]=0
    \qquad
    \text{for all } i\neq j.
\]
Thus
\[
    \mathbb{E}\|\bar p-\mu_m\|_2^2
    =
    \frac{1}{k^2}
    \sum_{j=1}^{k}\sigma_m^2
    =
    \frac{\sigma_m^2}{k}.
\]
Combining the bias and variance terms yields
\[
    \mathbb{E}\|\bar p-\eta\|_2^2
    =
    B_m
    +
    \frac{\sigma_m^2}{k}.
\]

We now prove the bound $\sigma_m^2\leq 1$. Since $p_j\in\Delta^C$, every coordinate of
$p_j$ is nonnegative and $\sum_{c=1}^{C}p_{j,c}=1$. Therefore
\[
    \|p_j\|_2^2
    =
    \sum_{c=1}^{C}p_{j,c}^2
    \leq
    \left(\sum_{c=1}^{C}p_{j,c}\right)^2
    =
    1.
\]
Also,
\[
    \sigma_m^2
    =
    \mathbb{E}\|p_j-\mu_m\|_2^2.
\]
Expanding this expression,
\[
    \sigma_m^2
    =
    \mathbb{E}\left[
        \|p_j\|_2^2
        -2\langle p_j,\mu_m\rangle
        +\|\mu_m\|_2^2
    \right].
\]
Because $\mu_m=\mathbb{E}[p_j]$,
\[
    \mathbb{E}\langle p_j,\mu_m\rangle
    =
    \left\langle \mathbb{E}[p_j],\mu_m\right\rangle
    =
    \langle \mu_m,\mu_m\rangle
    =
    \|\mu_m\|_2^2.
\]
Hence
\[
    \sigma_m^2
    =
    \mathbb{E}\|p_j\|_2^2
    -
    \|\mu_m\|_2^2.
\]
Since $\mathbb{E}\|p_j\|_2^2\leq 1$ and $\|\mu_m\|_2^2\geq 0$,
\[
    \sigma_m^2\leq 1.
\]
Also $\sigma_m^2\geq 0$ because it is the expectation of a squared norm. Therefore
\[
    0\leq \sigma_m^2\leq 1.
\]
Substituting this into the exact decomposition gives
\[
    \mathbb{E}\|\bar p-\eta\|_2^2
    =
    B_m+\frac{\sigma_m^2}{k}
    \leq
    B_m+\frac{1}{k}.
\]
Since $n=km$, we have $k=n/m$, and therefore
\[
    \frac{1}{k}
    =
    \frac{m}{n}.
\]
Thus
\[
    \mathbb{E}\|\bar p_{k,m}(x)-\eta(x)\|_2^2
    \leq
    B_m(x)+\frac{m}{n}.
\]

Finally, suppose $B_m(x)\leq A m^{-2\alpha}$ for some $A>0$ and $\alpha>0$. Then
\[
    \mathbb{E}\|\bar p_{k,m}(x)-\eta(x)\|_2^2
    \leq
    A m^{-2\alpha}
    +
    \frac{m}{n}.
\]
Define
\[
    R(m)
    =
    A m^{-2\alpha}
    +
    \frac{m}{n}
\]
for $m>0$. Its derivative is
\[
    R'(m)
    =
    -2\alpha A m^{-2\alpha-1}
    +
    \frac{1}{n}.
\]
Setting $R'(m)=0$ gives
\[
    2\alpha A m^{-2\alpha-1}
    =
    \frac{1}{n}.
\]
Equivalently,
\[
    m^{2\alpha+1}
    =
    2\alpha A n.
\]
Thus the unique stationary point is
\[
    m^\star
    =
    \left(2\alpha A n\right)^{\frac{1}{2\alpha+1}}.
\]
Moreover,
\[
    R''(m)
    =
    2\alpha(2\alpha+1)A m^{-2\alpha-2}
    >
    0
\]
for all $m>0$. Therefore $m^\star$ is the unique minimizer of $R(m)$ over positive real
$m$.
\end{proof}

\begin{remark}
The decomposition holds for any $B_m(x)$. The bound $B_m(x)\le A m^{-2\alpha}$ is a stylized model of how bias may decrease as more independent random reservoirs are concatenated. Such polynomial rates are standard in approximation theory and align with classical random-feature results, e.g., $O(1/m)$ rates for shallow networks~\citep{barron1993universal} and convergence bounds for random features in kernel methods~\citep{rahimi2007random,rahimi2008weighted,rudi2017generalization}. We use $A m^{-2\alpha}$ only as a phenomenological model, not a theorem for BiRC2D.
\end{remark}

\paragraph{Interpretation.} The result explains the need for both feature concatenation and prediction averaging. Increasing $m$ enriches each head’s representation (captured by $B_m(x)$), but reduces the number of heads to $k=n/m$, so prediction averaging contributes at most $1/k=m/n$ variance. Thus, the grouped ensemble trades bias $B_m(x)$ against variance $\le m/n$. Under $B_m(x)\le A m^{-2\alpha}$, the bound becomes $A m^{-2\alpha}+\frac{m}{n}$: the first term decreases with $m$, the second increases. Hence, $m=1$ (prediction-only) leaves high bias, $m=n$ (feature-only) removes variance reduction, and an intermediate $m$ balances both, explaining the benefit of a balanced ensemble.\label{sec:appendix-theory}

\section{Additional Experimental Details}
\label{sec:appendix-experiment-details}

This section provides additional details and results behind the performed experiments.  

All ablation baselines are designed to preserve the main online-learning setting of CIRCLE. In particular, the feature-extractor ablations replace only the frozen image representation, while keeping the same analytic SLDA head and the same ensembling mechanism. Conversely, the head ablations replace only the final analytic classifier, while keeping the BiRC2D feature extractor fixed. All variants are updated in a streaming manner and are trained without gradient-based optimisation of the feature extractor or classification head.

The code for all experiments can be found at  \url{https://anonymous.4open.science/r/circle}.

\subsection{Detailed Experimental Setup}\label{sec:appendix-detailedsetup}

\textbf{Datasets and protocol.}
We evaluate on three standard CIL benchmarks: CIFAR-100 (100 classes, $32{\times}32$)~\cite{krizhevsky2009learning}, TinyImageNet (200 classes, $64{\times}64$)~\cite{wu2017tiny}, and ImageNet-Subset (100 classes from ImageNet-1k at $224{\times}224$)~\cite{ILSVRC15}. For long-horizon experiments, we additionally use the full ImageNet-1k. All experiments are conducted in the cold-start exemplar-free regime: classes are split evenly across $T$ tasks, no replay buffer is used, and no pre-trained weights or external data are allowed.

For the main comparison, we consider $T \in \{10, 20, 50, 100\}$. The settings $T=10$ and $T=20$ are standard in the EFCIL literature~\citep{magistri2024elasticfeatureconsolidationcold,grzegorz2024taskrecency}. The longer horizon $T=50$ has appeared occasionally in warm-start studies (e.g., DS-AL~\citep{zhuang2024dsaldualstreamanalyticlearning}) but, to our knowledge, not in cold-start EFCIL on standard benchmarks. The $T=100$ setting is likewise unexplored in cold-start CIL; for CIFAR-100 and ImageNet-Subset, it corresponds to one class per task, i.e., an extreme distribution-shift regime. For full ImageNet-1k, we further evaluate $T=500$ (two classes per task), following~\citet{zhuang2024dsaldualstreamanalyticlearning}. Results are averaged over 5 seeds, except for $T=500$, where we use 2 seeds due to the high cost of training-based baselines.

Our protocol for splitting a dataset into multiple tasks is as follows. Assume we wish to split a dataset into $T$ tasks, where each task has  $C$ classes. We fix a \textit{data} seed (we use seed $0$) using which we shuffle the classes. Then we assign the samples from the first $C$ classes from the list into $\mathcal{T}_1$, the next $C$ classes into $\mathcal{T}_2$, etc. For different splits (i.e., different $T$ values), we still use the same shuffling seed $0$, this way the splits are consistent. Across multiple random instantiations of the same experiment we \textbf{do not} change the \textit{data} seed, i.e., the underlying data split remains the same across re-runs of the experiment. The varied seeds vary mostly model parameters, so model weight initializations, but also some data parameters, namely the augmentations for models that use them.

\textbf{Baselines.}
We compare against representative EFCIL methods across key design axes. 
\emph{Drift-compensation / distillation}: EFC++~\citep{magistri2025efcelasticfeatureconsolidation}, AdaGauss~\citep{grzegorz2024taskrecency}, ADC~\citep{goswami2024resurrectingoldclassesnew}, LwF~\cite{DBLP:journals/corr/LiH16e}. 
\emph{Prototype rehearsal / augmentation}: PASS~\citep{Zhu_2021_CVPR}, IL2A~\citep{zhu2021class}. 
\emph{Frozen-backbone analytic methods} (adapted to cold-start by training on $\mathcal{T}_1$ and freezing thereafter): ACIL~\citep{zhuang2022acilanalyticclassincrementallearning}, DS-AL~\citep{zhuang2024dsaldualstreamanalyticlearning}, FeCAM~\citep{goswami2024fecamexploitingheterogeneityclass}, FeTrIL~\citep{petit2023fetrilfeaturetranslationexemplarfree}. 
All trained-backbone baselines use ResNet-18, following standard EFCIL practice. Implementations for ACIL, DS-AL, FeTrIL, IL2A, LwF, and PASS are from PyCIL~\cite{zhou2023pycil}; others use official code.

\textbf{Hyperparameter tuning.}
We tune all baselines for each $(T,\text{dataset})$ configuration. This is crucial at $T=50$ and $T=100$, where prior hyperparameters perform poorly. Our protocol thus provides a fair assessment at long horizons. Details and optimal values for all methods, including CIRCLE, are given in Appendix~\ref{sec:hyperparameters}.


\textbf{Metrics.} We report \emph{average incremental accuracy} ($\bar{A}$) and \emph{final accuracy} ($A_T$). For efficiency, we also report total wall-clock training time, measured by running each experiment in isolation on a single NVIDIA L40S GPU.

\textbf{Hardware and compute.} The experiments in Table ~\ref{tab:main-comparison} were tuned and executed on NVIDIA A100 GPUs (40 GB memory per GPU) in a SLURM cluster, with the experiment lengths having strongly varied depending on the model that was used. However, no single experiment took more than 24 hours to finish. The hyperparameter tuning for the ablation experiments were performed by utilizing eight B200 GPUs (180 GB memory per GPU), used for less than 48 hours in total. Finally, the duration experiments were performed on 8 NVIDIA L40S GPUs (48 GB memory per GPU), each experiment taking exactly as long as specified in Table~\ref{tab:time}.

\textbf{Licensing of the data and code.} CIFAR-100 does not include a specified license, ImageNet-1k, ImageNet-Subset and TinyImageNet are used under the ImageNet terms of access, which restrict use to non-commercial research and educational purposes. To obtain baseline results, we incorporate the codebases of the baselines into our project, which is allowed under MIT license which is used by the codebases of EFC++~\cite{magistri2025efcelasticfeatureconsolidation}, ADC~\cite{goswami2024resurrectingoldclassesnew}, FeCAM~\cite{goswami2024fecamexploitingheterogeneityclass} and PyCIL (PyCIL covers LwF~\cite{DBLP:journals/corr/LiH16e}, PASS~\cite{Zhu_2021_CVPR}, IL2A~\cite{zhu2021class}, ACIL~\cite{zhuang2022acilanalyticclassincrementallearning}, DS-AL~\cite{zhuang2024dsaldualstreamanalyticlearning} and FeTRiL~\cite{petit2023fetrilfeaturetranslationexemplarfree}). The codebase of AdaGauss is the only one that provides no license, with the \textit{arxiv} version of the paper being licensed under CC BY 4.0.

\subsection{Additional Results on Performance vs Wall-clock Time}\label{sec:appendix-wallclocktimefig}

Figure~\ref{fig:duration-accuracy-tradeoff} provides an additional view of the accuracy--time trade-off. For any fixed wall-clock budget, it shows how far each method can progress through the incremental stream and what accuracy it achieves at that point. In most settings, CIRCLE completes the entire \(T=20\) stream before most baselines have finished training even on the first task. This is particularly important when comparing against freeze-after-first-task methods: although these methods avoid later backbone updates, they still require an expensive first-task training stage before incremental learning can begin. CIRCLE has no such warm-up stage and can update incrementally from the very first sample obtained. 

\begin{figure}[ht]
\centering
\includegraphics[width=0.95\linewidth]{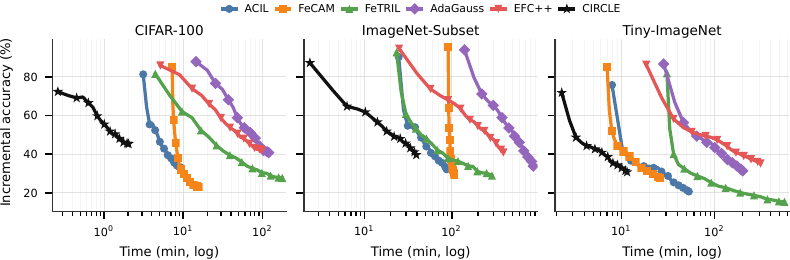}
\caption{Accuracy-computation time trade-off at \(T=20\). Each panel plots incremental test accuracy up to the current task against average wall-clock time of training up to the current task.}
\label{fig:duration-accuracy-tradeoff}
\end{figure}

\subsection{Feature-extractor Ablations}

Table~\ref{tab:app-feature-extractor-baselines} summarises the feature extractors used in the ablation study. The feature-extractor ablations test whether CIRCLE's performance comes from the BiRC2D representation itself or merely from using a frozen high-dimensional image embedding. The random ResNet-18, ResNet-18-LN, and VGG-13 baselines compare against standard convolutional architectures without training. Patch-RNN and Conv+Patch-RNN test simpler reservoir-style alternatives that sequentialise the image, with and without local random convolutional preprocessing. The scattering transform provides a deterministic, theory-driven non-trained baseline.

\begin{table}[ht]
\centering
\small
\caption{Feature-extractor baselines used in the ablation study. All feature extractors are frozen and are followed by the same SLDA classification head.}
\label{tab:app-feature-extractor-baselines}
\begin{tabular}{p{0.1\linewidth}p{0.38\linewidth}p{0.43\linewidth}}
\toprule
\textbf{Extractor} & \textbf{Description} &\textbf{ Purpose of the ablation} \\
\midrule
\textbf{Random ResNet-18 (BN)}
&
A standard ResNet-18 with randomly initialised and frozen weights. BatchNorm layers are retained, matching the architecture used by the trained-backbone baselines.
&
Tests whether the gains of CIRCLE can be explained simply by using a high-capacity convolutional architecture, even without training.
\\
\midrule

\textbf{Random ResNet-18 (LN)}
&
A randomly initialised and frozen ResNet-18 variant in which BatchNorm is replaced by LayerNorm.
&
Controls for possible pathologies of BatchNorm in randomly initialised networks. This separates the effect of the ResNet architecture from the effect of BatchNorm statistics.
\\
\midrule

\textbf{Random VGG-13}
&
A randomly initialised and frozen VGG-13-style convolutional network.
&
Provides a comparable-capacity convolutional baseline without residual connections or BatchNorm. Also tests if a different type of CNN could perform better.
\\
\midrule

\textbf{Patch-RNN}
&
The image is divided into patches, the patches are flattened, and the resulting sequence is processed by a single reservoir recurrent model. The final reservoir (hidden) state is used as the image embedding.
&
Tests a simple reservoir-based image representation obtained by sequentialising the image. This is the natural patch-level extension of earlier approaches that feed flattened image pixels into an ESN-style reservoir.
\\
\midrule

\textbf{Conv+Patch-RNN}
&
A small fixed random convolutional stack is first applied to the image. The resulting spatial feature map is then divided into patches and passed through a reservoir recurrent model.
&
Tests whether adding local random convolutional preprocessing is sufficient to close the gap between a simple patch reservoir and BiRC2D. This is approach is very stronly motivated by convolutional reservoir-computing approaches that combine fixed random CNN features with reservoir dynamics \cite{chang2019reinforcementlearningconvolutionalreservoir}.
\\
\midrule

\textbf{Scattering transform}
&
A deterministic wavelet-scattering feature extractor. It applies fixed wavelet filters, modulus nonlinearities, and averaging operations to produce a CNN-like representation without learned weights. Taken from the work by \citet{DBLP:journals/corr/abs-1203-1513}.
&
Tests CIRCLE against a strong non-random, but non-trained image representation whose filters are chosen from wavelet theory rather than learned from data. \\

\bottomrule
\end{tabular}
\end{table}

\subsection{Classification-head Ablations}\label{sec:appendix-headablation}

Table~\ref{tab:app-head-baselines} summarises the analytic classification heads used in the ablation study. All heads ignore classes that have not yet been observed. Head-specific hyperparameters, such as regularisation, shrinkage, and temperature, are tuned separately for each variant, together with the ensembling sizes. 

\begin{table}[ht]
\centering
\small
\caption{Classification-head baselines used in the ablation study. All heads consume the same BiRC2D features and are updated from streaming sufficient statistics.  Let \(z \in \mathbb{R}^d\) denote the frozen feature vector; \(n_c\) the number of observed examples from class \(c\); \(\mu_c\) the empirical class mean; and \(\pi_c\) the empirical class prior.}
\label{tab:app-head-baselines}
\begin{tabular}{p{0.09\linewidth}p{0.55\linewidth}p{0.27\linewidth}}
\toprule
\textbf{Head} & \textbf{Description} & \textbf{Main modelling assumption} \\
\midrule
Euclidean NCM
&
Nearest-class-mean classifier using Euclidean distance. For each class, the head stores the class sum and count, computes \(\mu_c\), and scores a test feature by
\[
s_c(z) = 2 z^\top \mu_c - \|\mu_c\|_2^2,
\]
which is equivalent to negative squared Euclidean distance up to a class-independent constant.
&
Each class is represented only by its mean. Class covariance and feature correlations are ignored. \newline Used by ADC \cite{goswami2024resurrectingoldclassesnew}.
\\
\midrule
Cosine NCM
&
Nearest-class-mean classifier using cosine similarity. Features are first \(\ell_2\)-normalised before being accumulated into class prototypes. The final prototype is normalised again, and the score is
\[
s_c(z) = \tau \,
\frac{z}{\|z\|_2}^{\top}
\frac{\mu_c}{\|\mu_c\|_2},
\]
where \(\tau\) is a temperature parameter.
&
Only angular similarity to the class prototype matters. Feature norms and covariance structure are ignored.
\\
\midrule

Ridge RLS
&
Ridge-regression classifier with one-hot class targets. The head accumulates
\[
A = \lambda I + \sum_i z_i z_i^\top,
\qquad
B = \sum_i z_i y_i^\top,
\]
and solves \(W = A^{-1}B\). Scores are \(s(z)=z^\top W\).
&
Learns a linear readout by least squares, with ridge regularisation. It does not explicitly model class-conditional covariance. Used by ACIL and DS-AL \cite{zhuang2022acilanalyticclassincrementallearning, zhuang2024dsaldualstreamanalyticlearning}.
\\
\midrule

Diagonal LDA
&
Shared-covariance LDA with a diagonal covariance approximation. The head stores class means and a pooled per-feature variance estimate. Scores use the diagonal precision:
\[
s_c(z)
=
z^\top \Sigma_{\mathrm{diag}}^{-1}\mu_c
-
\frac{1}{2}\mu_c^\top \Sigma_{\mathrm{diag}}^{-1}\mu_c
+
\log \pi_c .
\]
&
Models per-feature variances but ignores correlations between feature dimensions.
\\
\midrule

FeCAM Mahalanobis
&
FeCAM-style Mahalanobis classifier with class-specific covariance estimates. Each class is represented by a mean and a regularised covariance matrix. Scores are based on the negative Mahalanobis distance,
\[
s_c(z)
=
-\alpha
(z-\mu_c)^\top \Sigma_c^{-1}(z-\mu_c).
\]
&
Uses class-specific second-order structure, but the covariance estimates can be noisy when few examples are available per class. Used by FeCAM \cite{goswami2024fecamexploitingheterogeneityclass}.
\\
\midrule

QDA, class-specific covariance
&
Quadratic discriminant analysis with one covariance matrix per class. The head estimates \(\mu_c\) and \(\Sigma_c\), applies ridge regularisation and shrinkage toward the shared covariance, and scores by
\[
s_c(z)
=
-\frac{1}{2}
(z-\mu_c)^\top \Sigma_c^{-1}(z-\mu_c)
-
\frac{1}{2}\log |\Sigma_c|
+
\log \pi_c .
\]
&
Allows each class to have its own full covariance, giving a more flexible but more data-hungry classifier than SLDA. 
\\[0pt]

\bottomrule
\end{tabular}
\end{table}

The classification-head ablations test whether the gains come from the SLDA readout or simply from using any analytic streaming classifier. Mean-based heads test whether class means alone are sufficient. Ridge regression tests a discriminative linear readout. Diagonal LDA tests a cheaper covariance model than regular LDA. FeCAM-style Mahalanobis and QDA test more flexible class-specific covariance models. SLDA provides the best trade-off in this setting because it uses a full covariance estimate shared across classes, which captures feature correlations while avoiding the cold-start instability of estimating a separate covariance matrix for every class.

\section{Hyperparameters}
\label{sec:hyperparameters}

\subsection{Hyperparameter Tuning Process}
We tune hyperparameters separately for each dataset--split--algorithm tuple. That is, each algorithm on each dataset split is assigned its own hyperparameter configuration, selected using validation set performance. This is motivated by the fact that the different extreme splits (such as $T=10$ vs $T=100$) have very differing task dynamics (one split has very large tasks, another has very small tasks), and this likely means that very different hyperparameter sets would be optimal for both cases, hence we tune across splits. We use $20\%$ of data from each class for the validation dataset, on which the models are evaluated when selecting the results. After selection of the optimal hyperparameters, both the validation and train sets are merged into a new training set and the model is trained on the full set, and tested on the testing set. Hyperparameter tuning uses a held-out validation split for model selection only. After selecting hyperparameters, train and validation data are merged, and no test data are used for selection.

For the baseline methods, we use a few-stage randomized grid-search procedure. In the first stage, we tune the main optimization hyperparameters together, such as learning rate, number of training epochs, and batch size, while keeping the remaining hyperparameters fixed to sensible values, typically chosen from the corresponding literature or recommended defaults from the implementations. After this stage, we retain a small set of the best-performing configurations according to validation performance for every data--split--algorithm tuple. Starting from these configurations, we then tune the remaining method-specific hyperparameters using an additional randomized grid search. In the cases where a method still substantially underperforms relative to other baselines or previously reported results, we perform an additional, narrower search around the few best configurations found so far.

For our method, we follow the same general procedure, but separate the search into two stages. We first set the total ensemble size to one and tune the core hyperparameters of the model using randomized grid search. This allows us to identify strong single-model configurations without the additional cost of ensembling. After selecting the best-performing single-model configurations, we tune the ensemble-related hyperparameters, together with the reservoir's output dimensionality. Throughout this process, we always ensure that the total number (summed over trainable and non-trainable) parameters in our model does not differ by more than $5\%$ from a ResNet-18 that the baselines use, so that the model remains exactly comparable against the baselines (see the parameter counts in Table~\ref{tab:time}). Here too, the final hyperparameter configuration for each dataset--split--algorithm tuple is the one achieving the best validation performance.

\subsection{Selected Hyperparameters}

Below, we list the selected optimal hyperparameters for the main results in Table \ref{tab:main-comparison}. The remaining hyperparameters, for the ablation and other experiments can be directly found in our released codebase.
\paragraph{CIFAR-100, \(T=10\).}
\begin{enumerate}
\item \textbf{AdaGauss}: batch size 128; \(\alpha\) = 1; \(\lambda\) = 10; learning rate 0.1; adapter learning rate 0.01; backbone learning rate 0.01; epochs 440; \(s\) = 64; singular-value fraction 0.95; \(\tau\) = 2; weight decay 0.0005.
\item \textbf{EFC++}: batch size 128; balanced epochs 50; balanced learning rate 0.001; damping 0.1; first-task epochs 200; later-task epochs 200; \(\lambda\) = 10; prototype update -0.2.
\item \textbf{ADC}: batch size 256; epochs 800; initial epochs 800; initial learning rate 0.05; \(\lambda\) = 10; learning rate 0.05; temperature 2; weight decay 0.0005.
\item \textbf{ACIL}: incremental batch size 128; initial batch size 64; buffer size 8192; decay 0.18; \(\gamma\) = 0.28; initial epochs 258; initial learning rate 0.0408; initial weight decay 0.0005.
\item \textbf{FeCAM}: batch size 128; \(\alpha_1\) = 0.6; \(\alpha_2\) = 0.6; \(\beta\) = 0.5; initial epochs 400; initial learning rate 0.02; initial weight decay 0.0005.
\item \textbf{FeTRIL}: batch size 32; epochs 200; initial epochs 200; initial learning rate 0.02; initial weight decay 0.0005; learning rate 0.02; temperature 2.
\item \textbf{IL2A}: batch size 128; epochs 200; \(\gamma\) = 0.1; learning rate 0.005; augmentation ratio 2.5; step size 45; temperature 2; softmax temperature 0.1.
\item \textbf{PASS}: batch size 16; epochs 200; \(\gamma\) = 0.1; feature-distillation weight 10; prototype weight 10; learning rate 0.001; step size 45; temperature 2; softmax temperature 0.1; weight decay 0.0002.
\item \textbf{LwF}: batch size 128; epochs 220; initial epochs 220; initial learning rate 0.02; initial learning-rate decay 0.5; initial weight decay 0.0005; \(\lambda\) = 3; learning rate 0.02; learning-rate decay 0.5; temperature 2; weight decay 0.0005.
\item \textbf{DS-AL}: incremental batch size 128; initial batch size 64; buffer size 8192; compensation ratio 0.6; \(\gamma\) = 0.336; compensation \(\gamma\) = 0.1; initial epochs 258; initial learning rate 0.04; initial weight decay 0.0005; Scheduler: multi-step, milestones 120 and 140, with scheduler \(\gamma = 0.18\); warmup epochs 0.
\item \textbf{CIRCLE}: reservoir internal dimension 480; head ensemble size 8; feature ensemble size 8; reservoir output dimension 1100; CNN-stem channels 16 and 32; CNN-stem kernels 3 and 3; leak 0.8; leaky-ReLU slope 0.01; reservoir layers 1; patch sizes 2 and 4; sparsity 0.9; spectral radius 0.9.
\end{enumerate}

\paragraph{CIFAR-100, \(T=20\).}
\begin{enumerate}
\item \textbf{AdaGauss}: batch size 256; \(\alpha\) = 1; \(\lambda\) = 10; learning rate 0.1; adapter learning rate 0.01; backbone learning rate 0.01; epochs 220; \(s\) = 64; singular-value fraction 0.95; \(\tau\) = 2; weight decay 0.0005.
\item \textbf{EFC++}: batch size 64; balanced epochs 25; balanced learning rate 0.0002; damping 0.1; first-task epochs 200; later-task epochs 200; \(\lambda\) = 10; prototype update -0.2.
\item \textbf{ADC}: batch size 256; epochs 400; initial epochs 400; initial learning rate 0.05; \(\lambda\) = 10; learning rate 0.05; temperature 2; weight decay 0.0005.
\item \textbf{ACIL}: incremental batch size 128; initial batch size 128; buffer size 8192; decay 0.1; \(\gamma\) = 0.1; initial epochs 320; initial learning rate 0.02; initial weight decay 0.0005.
\item \textbf{FeCAM}: batch size 128; \(\alpha_1\) = 1; \(\alpha_2\) = 1; \(\beta\) = 0.5; initial epochs 400; initial learning rate 0.02; initial weight decay 0.0005.
\item \textbf{FeTRIL}: batch size 64; epochs 200; initial epochs 200; initial learning rate 0.02; initial weight decay 0.0005; learning rate 0.02; temperature 2.
\item \textbf{IL2A}: batch size 64; epochs 200; \(\gamma\) = 0.1; learning rate 0.001; augmentation ratio 2.5; step size 45; temperature 2; softmax temperature 0.1.
\item \textbf{PASS}: batch size 32; \(\gamma\) = 0.1; feature-distillation weight 10; prototype weight 10; learning rate 0.001; step size 45; temperature 2; softmax temperature 0.1; weight decay 0.0002.
\item \textbf{LwF}: batch size 128; epochs 110; initial epochs 110; initial learning rate 0.02; initial learning-rate decay 0.5; initial weight decay 0.0005; \(\lambda\) = 3; learning rate 0.02; learning-rate decay 0.5; temperature 2; weight decay 0.0005.
\item \textbf{DS-AL}: incremental batch size 128; initial batch size 128; buffer size 8192; compensation ratio 0.6; \(\gamma\) = 0.1; compensation \(\gamma\) = 0.1; initial epochs 320; initial learning rate 0.023; initial weight decay 0.0005; Scheduler: multi-step, milestones 120 and 140, with scheduler \(\gamma = 0.1\); warmup epochs 0.
\item \textbf{CIRCLE}: reservoir internal dimension 480; head ensemble size 8; feature ensemble size 8; reservoir output dimension 1100; CNN-stem channels 16 and 32; CNN-stem kernels 3 and 3; leak 0.8; leaky-ReLU slope 0.01; reservoir layers 1; patch sizes 2 and 4; sparsity 0.9; spectral radius 0.9.
\end{enumerate}

\paragraph{CIFAR-100, \(T=50\).}
\begin{enumerate}
\item \textbf{AdaGauss}: batch size 128; \(\alpha\) = 1; \(\lambda\) = 10; learning rate 0.1; adapter learning rate 0.005; backbone learning rate 0.005; epochs 220; \(s\) = 64; singular-value fraction 0.95; \(\tau\) = 2; weight decay 0.0005.
\item \textbf{EFC++}: batch size 16; balanced epochs 19; balanced learning rate 0.0002; damping 0.1; first-task epochs 150; later-task epochs 150; \(\lambda\) = 10; prototype update -0.2.
\item \textbf{ADC}: batch size 64; epochs 300; initial epochs 300; initial learning rate 0.025; \(\lambda\) = 10; learning rate 0.025; temperature 2; weight decay 0.0005.
\item \textbf{ACIL}: incremental batch size 64; initial batch size 64; buffer size 8192; decay 0.18; \(\gamma\) = 0.28; initial epochs 218; initial learning rate 0.0432; initial weight decay 0.0005.
\item \textbf{FeCAM}: batch size 64; \(\alpha_1\) = 1; \(\alpha_2\) = 1; \(\beta\) = 0.7; initial epochs 272; initial learning rate 0.036; initial weight decay 0.0005.
\item \textbf{FeTRIL}: batch size 32; epochs 200; initial epochs 200; initial learning rate 0.005; initial weight decay 0.0005; learning rate 0.005; temperature 2.
\item \textbf{IL2A}: batch size 64; epochs 150; \(\gamma\) = 0.1; learning rate 0.00025; augmentation ratio 2.5; step size 45; temperature 2; softmax temperature 0.1.
\item \textbf{PASS}: batch size 32; epochs 75; \(\gamma\) = 0.1; feature-distillation weight 10; prototype weight 10; learning rate 0.00075; step size 45; temperature 2; softmax temperature 0.1; weight decay 0.0002.
\item \textbf{LwF}: batch size 64; epochs 110; initial epochs 110; initial learning rate 0.015; initial learning-rate decay 0.25; initial weight decay 0.0005; \(\lambda\) = 3; learning rate 0.015; learning-rate decay 0.25; temperature 2; weight decay 0.0005.
\item \textbf{DS-AL}: incremental batch size 64; initial batch size 64; buffer size 8192; compensation ratio 0.6; \(\gamma\) = 0.336; compensation \(\gamma\) = 0.1; initial epochs 218; initial learning rate 0.0432; initial weight decay 0.0005; Scheduler: multi-step, milestones 120 and 140, with scheduler \(\gamma = 0.18\); warmup epochs 0.
\item \textbf{CIRCLE}: reservoir internal dimension 480; head ensemble size 8; feature ensemble size 8; reservoir output dimension 1100; CNN-stem channels 16 and 32; CNN-stem kernels 3 and 3; leak 0.8; leaky-ReLU slope 0.01; reservoir layers 1; patch sizes 2 and 4; sparsity 0.9; spectral radius 0.9.
\end{enumerate}

\paragraph{CIFAR-100, \(T=100\).}
\begin{enumerate}
\item \textbf{AdaGauss}: batch size 96; \(\alpha\) = 1; \(\lambda\) = 2; learning rate 0.125; adapter learning rate 0.01; backbone learning rate 0.01; epochs 55; \(s\) = 64; singular-value fraction 0.95; \(\tau\) = 2.5; weight decay 0.0005.
\item \textbf{EFC++}: batch size 16; balanced epochs 24; balanced learning rate 0.00032; damping 0.05; first-task epochs 172; later-task epochs 112; \(\lambda\) = 8; prototype update -0.1.
\item \textbf{ADC}: batch size 256; epochs 360; initial epochs 360; initial learning rate 0.08; \(\lambda\) = 6; learning rate 0.08; temperature 2; weight decay 0.0005.
\item \textbf{ACIL}: incremental batch size 64; initial batch size 32; buffer size 8192; decay 0.108; \(\gamma\) = 0.84; initial epochs 218; initial learning rate 0.0432; initial weight decay 0.0005.
\item \textbf{FeCAM}: batch size 64; \(\alpha_1\) = 2; \(\alpha_2\) = 0.7; \(\beta\) = 1.05; initial epochs 231; initial learning rate 0.018; initial weight decay 0.0005.
\item \textbf{FeTRIL}: batch size 16; epochs 150; initial epochs 260; initial learning rate 0.0035; initial weight decay 0.0005; learning rate 0.0025; temperature 2.5.
\item \textbf{IL2A}: batch size 32; epochs 150; \(\gamma\) = 0.16; learning rate 0.000175; augmentation ratio 1.5; step size 45; temperature 0.14; softmax temperature 0.14.
\item \textbf{PASS}: batch size 32; epochs 75; \(\gamma\) = 0.1; feature-distillation weight 10; prototype weight 10; learning rate 0.00075; step size 45; temperature 2; softmax temperature 0.1; weight decay 0.0002.
\item \textbf{LwF}: batch size 128; epochs 110; initial epochs 110; initial learning rate 0.02; initial learning-rate decay 0.5; initial weight decay 0.0005; \(\lambda\) = 3; learning rate 0.02; learning-rate decay 0.5; temperature 2; weight decay 0.0005.
\item \textbf{DS-AL}: incremental batch size 64; initial batch size 32; buffer size 8192; compensation ratio 0.6; \(\gamma\) = 1.008; compensation \(\gamma\) = 0.1; initial epochs 218; initial learning rate 0.0432; initial weight decay 0.0005; Scheduler: multi-step, milestones 120 and 140, with scheduler \(\gamma = 0.108\); warmup epochs 0.
\item \textbf{CIRCLE}: reservoir internal dimension 480; head ensemble size 8; feature ensemble size 8; reservoir output dimension 1100; CNN-stem channels 16 and 32; CNN-stem kernels 3 and 3; leak 0.8; leaky-ReLU slope 0.01; reservoir layers 1; patch sizes 2 and 4; sparsity 0.9; spectral radius 0.9.
\end{enumerate}

\paragraph{TinyImageNet, \(T=10\).}
\begin{enumerate}
\item \textbf{AdaGauss}: batch size 192; \(\alpha\) = 0.6; \(\lambda\) = 8; learning rate 0.1; adapter learning rate 0.01; backbone learning rate 0.01; epochs 110; \(s\) = 64; singular-value fraction 0.95; \(\tau\) = 2; weight decay 0.0005.
\item \textbf{EFC++}: batch size 80; balanced epochs 62; balanced learning rate 0.00075; damping 0.05; first-task epochs 75; later-task epochs 90; \(\lambda\) = 3; prototype update -0.1.
\item \textbf{ADC}: batch size 256; epochs 320; initial epochs 320; initial learning rate 0.08; \(\lambda\) = 15; learning rate 0.08; temperature 3; weight decay 0.0005.
\item \textbf{ACIL}: incremental batch size 192; initial batch size 256; buffer size 8192; backbone ResNet-18 CBAM; decay 0.009; \(\gamma\) = 0.048; initial epochs 239; initial learning rate 0.08925; initial weight decay 0.0005; learning-rate milestones 143 and 203; scheduler MultiStep; warmup epochs 0.
\item \textbf{FeCAM}: batch size 256; \(\alpha_1\) = 1; \(\alpha_2\) = 0.6; \(\beta\) = 0.3; initial epochs 200; initial learning rate 0.5; initial weight decay 0.0005.
\item \textbf{FeTRIL}: batch size 16; epochs 50; initial epochs 300; initial learning rate 0.1; initial weight decay 0.0002; learning rate 0.0005; temperature 2; weight decay 0.001.
\item \textbf{IL2A}: batch size 64; epochs 100; \(\gamma\) = 0.3; feature-distillation weight 1.5; prototype weight 0.5; learning rate 0.0005; augmentation ratio 1; step size 45; temperature 0.8; softmax temperature 0.8; weight decay 0.0002.
\item \textbf{PASS}: batch size 32; epochs 100; \(\gamma\) = 0.1; feature-distillation weight 15; prototype weight 0.5; learning rate 0.001; step size 45; temperature 0.05; softmax temperature 0.05; weight decay 0.0002.
\item \textbf{LwF}: batch size 64; epochs 320; initial epochs 320; initial learning rate 0.01; initial learning-rate decay 0.5; initial weight decay 0.0005; \(\lambda\) = 3; learning rate 0.01; learning-rate decay 0.25; temperature 2.5; weight decay 0.0005.
\item \textbf{DS-AL}: incremental batch size 512; initial batch size 128; buffer size 8192; compensation ratio 1; \(\gamma\) = 0.06; compensation \(\gamma\) = 0.13; initial epochs 260; initial learning rate 0.108375; initial weight decay 0.0002; Scheduler: multi-step, milestones 156 and 221, with scheduler \(\gamma = 0.009\); warmup epochs 5.
\item \textbf{CIRCLE}: reservoir internal dimension 384; head ensemble size 7; feature ensemble size 7; reservoir output dimension 1024; CNN-stem channels 16; CNN-stem kernels 3; leak 0.7; leaky-ReLU slope 0.01; maximum rotation 0; reservoir layers 2; patch sizes 2; sparsity 0.5; spectral radius 0.9.
\end{enumerate}

\paragraph{TinyImageNet, \(T=20\).}
\begin{enumerate}
\item \textbf{AdaGauss}: batch size 256; \(\alpha\) = 1; \(\lambda\) = 10; learning rate 0.1; adapter learning rate 0.01; backbone learning rate 0.01; epochs 110; \(s\) = 64; singular-value fraction 0.95; \(\tau\) = 2; weight decay 0.0005.
\item \textbf{EFC++}: batch size 128; balanced epochs 100; balanced learning rate 0.001; damping 0.1; first-task epochs 100; later-task epochs 100; \(\lambda\) = 10; prototype update -0.2.
\item \textbf{ADC}: batch size 64; epochs 400; initial epochs 400; initial learning rate 0.005; \(\lambda\) = 10; learning rate 0.005; temperature 2; weight decay 0.0005.
\item \textbf{ACIL}: incremental batch size 64; initial batch size 192; buffer size 8192; backbone ResNet-18 CBAM; decay 0.05; \(\gamma\) = 0.18; initial epochs 180; initial learning rate 0.006; initial weight decay 0.0002; learning-rate milestones 108 and 153; scheduler MultiStep; warmup epochs 10.
\item \textbf{FeCAM}: batch size 128; \(\alpha_1\) = 1; \(\alpha_2\) = 1; \(\beta\) = 0.5; initial epochs 200; initial learning rate 0.5; initial weight decay 0.0005.
\item \textbf{FeTRIL}: batch size 16; epochs 50; initial epochs 300; initial learning rate 0.1; initial weight decay 0.0002; learning rate 0.0005; temperature 2; weight decay 0.001.
\item \textbf{IL2A}: batch size 64; epochs 100; \(\gamma\) = 0.3; feature-distillation weight 1.5; prototype weight 0.5; learning rate 0.0005; augmentation ratio 1; step size 45; temperature 0.8; softmax temperature 0.8; weight decay 0.0002.
\item \textbf{PASS}: batch size 32; epochs 100; \(\gamma\) = 0.1; feature-distillation weight 15; prototype weight 0.5; learning rate 0.001; step size 45; temperature 0.05; softmax temperature 0.05; weight decay 0.0002.
\item \textbf{LwF}: batch size 128; epochs 440; initial epochs 440; initial learning rate 0.02; initial learning-rate decay 0.5; initial weight decay 0.0005; \(\lambda\) = 3; learning rate 0.02; learning-rate decay 0.5; temperature 2; weight decay 0.0005.
\item \textbf{DS-AL}: incremental batch size 160; initial batch size 128; buffer size 8192; compensation ratio 0.75; \(\gamma\) = 0.18; compensation \(\gamma\) = 0.13; initial epochs 180; initial learning rate 0.009; initial weight decay 0.0002; Scheduler: multi-step, milestones 108 and 153, with scheduler \(\gamma = 0.05\); warmup epochs 10.
\item \textbf{CIRCLE}: reservoir internal dimension 384; head ensemble size 7; feature ensemble size 7; reservoir output dimension 1024; CNN-stem channels 16; CNN-stem kernels 3; leak 0.7; leaky-ReLU slope 0.01; maximum rotation 0; reservoir layers 2; patch sizes 2; sparsity 0.5; spectral radius 0.9.
\end{enumerate}

\paragraph{TinyImageNet, \(T=50\).}
\begin{enumerate}
\item \textbf{AdaGauss}: batch size 192; \(\alpha\) = 1.2; \(\lambda\) = 5; learning rate 0.08; adapter learning rate 0.01; backbone learning rate 0.01; epochs 220; \(s\) = 64; singular-value fraction 0.95; \(\tau\) = 1.6; weight decay 0.0005.
\item \textbf{EFC++}: batch size 64; balanced epochs 12; balanced learning rate 0.001; damping 0.1; first-task epochs 100; later-task epochs 100; \(\lambda\) = 10; prototype update -0.2.
\item \textbf{ADC}: batch size 128; epochs 100; initial epochs 100; initial learning rate 0.06; \(\lambda\) = 60; learning rate 0.06; temperature 4; weight decay 0.0005.
\item \textbf{ACIL}: incremental batch size 96; initial batch size 384; buffer size 8192; backbone ResNet-18 CBAM; decay 0.18; \(\gamma\) = 0.18; initial epochs 220; initial learning rate 0.015; initial weight decay 0.0002; learning-rate milestones 132 and 187; scheduler MultiStep; warmup epochs 10.
\item \textbf{FeCAM}: batch size 32; \(\alpha_1\) = 1; \(\alpha_2\) = 0.7; \(\beta\) = 0.7; initial epochs 190; initial learning rate 0.027; initial weight decay 0.0005.
\item \textbf{FeTRIL}: batch size 32; epochs 200; initial epochs 200; initial learning rate 0.005; initial weight decay 0.0002; learning rate 0.005; temperature 2; weight decay 0.001.
\item \textbf{IL2A}: batch size 96; epochs 112; \(\gamma\) = 0.05; learning rate 0.0005; augmentation ratio 1.5; step size 45; temperature 0.3; softmax temperature 0.3.
\item \textbf{PASS}: batch size 48; epochs 56; \(\gamma\) = 0.125; feature-distillation weight 15; prototype weight 7.5; learning rate 0.0009; step size 45; temperature 0.14; softmax temperature 0.14; weight decay 0.0002.
\item \textbf{LwF}: batch size 192; epochs 320; initial epochs 320; initial learning rate 0.01; initial learning-rate decay 0.5; initial weight decay 0.0005; \(\lambda\) = 4.2; learning rate 0.01; learning-rate decay 0.75; temperature 3.2; weight decay 0.0005.
\item \textbf{DS-AL}: incremental batch size 256; initial batch size 256; buffer size 8192; compensation ratio 0.85; \(\gamma\) = 0.252; compensation \(\gamma\) = 0.1; initial epochs 139; initial learning rate 0.0918; initial weight decay 0.0005; Scheduler: multi-step, milestones 45 and 90, with scheduler \(\gamma = 0.294\); warmup epochs 10.
\item \textbf{CIRCLE}: reservoir internal dimension 384; head ensemble size 7; feature ensemble size 7; reservoir output dimension 1024; CNN-stem channels 16; CNN-stem kernels 3; leak 0.7; leaky-ReLU slope 0.01; maximum rotation 0; reservoir layers 2; patch sizes 2; sparsity 0.5; spectral radius 0.9.
\end{enumerate}

\paragraph{TinyImageNet, \(T=100\).}
\begin{enumerate}
\item \textbf{AdaGauss}: batch size 192; \(\alpha\) = 0.72; \(\lambda\) = 1.5; learning rate 0.1; adapter learning rate 0.01; backbone learning rate 0.01; epochs 220; \(s\) = 64; singular-value fraction 0.95; \(\tau\) = 3.2; weight decay 0.0005.
\item \textbf{EFC++}: batch size 48; balanced epochs 10; balanced learning rate 0.00125; damping 0.1; first-task epochs 90; later-task epochs 115; \(\lambda\) = 3; prototype update -0.16.
\item \textbf{ADC}: batch size 32; epochs 360; initial epochs 360; initial learning rate 0.005; \(\lambda\) = 22; learning rate 0.005; temperature 1; weight decay 0.0005.
\item \textbf{ACIL}: incremental batch size 384; initial batch size 384; buffer size 8192; backbone ResNet-18 CBAM; decay 0.392; \(\gamma\) = 0.504; initial epochs 154; initial learning rate 0.054; initial weight decay 0.0005; learning-rate milestones 69 and 108; scheduler MultiStep; warmup epochs 5.
\item \textbf{FeCAM}: batch size 32; \(\alpha_1\) = 1; \(\alpha_2\) = 0.7; \(\beta\) = 0.7; initial epochs 190; initial learning rate 0.027; initial weight decay 0.0005.
\item \textbf{FeTRIL}: batch size 32; epochs 200; initial epochs 200; initial learning rate 0.01; initial weight decay 0.0002; learning rate 0.0075; temperature 2.5; weight decay 0.001.
\item \textbf{IL2A}: batch size 96; epochs 112; \(\gamma\) = 0.05; learning rate 0.0005; augmentation ratio 1.5; step size 45; temperature 0.3; softmax temperature 0.3; weight decay 0.0002.
\item \textbf{PASS}: batch size 32; epochs 100; \(\gamma\) = 0.2; feature-distillation weight 60; prototype weight 2; learning rate 0.0005; step size 45; temperature 0.1; softmax temperature 0.1; weight decay 0.0002.
\item \textbf{LwF}: batch size 64; epochs 308; initial epochs 308; initial learning rate 0.02; initial learning-rate decay 0.5; initial weight decay 0.0005; \(\lambda\) = 3; learning rate 0.02; learning-rate decay 0.25; temperature 1; weight decay 0.0005.
\item \textbf{DS-AL}: incremental batch size 512; initial batch size 192; buffer size 8192; compensation ratio 1; \(\gamma\) = 0.63; compensation \(\gamma\) = 0.1; initial epochs 169; initial learning rate 0.054; initial weight decay 0.0005; Scheduler: multi-step, milestones 101 and 144, with scheduler \(\gamma = 0.588\); warmup epochs 0.
\item \textbf{CIRCLE}: reservoir internal dimension 384; head ensemble size 7; feature ensemble size 7; reservoir output dimension 1024; CNN-stem channels 16; CNN-stem kernels 3; leak 0.7; leaky-ReLU slope 0.01; maximum rotation 0; reservoir layers 2; patch sizes 2; sparsity 0.5; spectral radius 0.9.
\end{enumerate}

\paragraph{ImageNet-Subset, \(T=10\).}
\begin{enumerate}
\item \textbf{AdaGauss}: batch size 128; \(\alpha\) = 1; \(\lambda\) = 10; learning rate 0.1; adapter learning rate 0.05; backbone learning rate 0.05; epochs 110; \(s\) = 64; singular-value fraction 0.95; \(\tau\) = 2; 224-pixel inputs enabled; weight decay 0.0005.
\item \textbf{EFC++}: batch size 64; balanced epochs 50; balanced learning rate 0.001; damping 0.1; first-task epochs 100; later-task epochs 100; \(\lambda\) = 10; prototype update -0.2.
\item \textbf{ADC}: batch size 256; epochs 200; initial epochs 200; initial learning rate 0.25; \(\lambda\) = 20; learning rate 0.25; temperature 2; weight decay 0.0005.
\item \textbf{ACIL}: incremental batch size 256; initial batch size 128; buffer size 16384; decay 0.28; \(\gamma\) = 0.28; initial epochs 145; initial learning rate 0.0408; initial weight decay 0.0005.
\item \textbf{FeCAM}: batch size 256; \(\alpha_1\) = 1; \(\alpha_2\) = 0.6; \(\beta\) = 0.3; initial epochs 200; initial learning rate 0.5; initial weight decay 0.0005.
\item \textbf{FeTRIL}: batch size 256; epochs 400; initial epochs 400; initial learning rate 0.02; initial weight decay 0.0005; learning rate 0.02; temperature 2.
\item \textbf{IL2A}: batch size 64; epochs 100; \(\gamma\) = 0.1; learning rate 0.001; augmentation ratio 2.5; step size 45; temperature 2; softmax temperature 0.1.
\item \textbf{PASS}: batch size 32; epochs 100; \(\gamma\) = 0.1; feature-distillation weight 10; prototype weight 10; learning rate 0.0002; step size 45; temperature 2; softmax temperature 0.1; weight decay 0.0002.
\item \textbf{LwF}: batch size 256; epochs 110; initial epochs 110; initial learning rate 0.02; initial learning-rate decay 0.02; initial weight decay 0.0005; \(\lambda\) = 3; learning rate 0.02; learning-rate decay 0.02; temperature 2; weight decay 0.0005.
\item \textbf{DS-AL}: incremental batch size 256; initial batch size 128; buffer size 16384; compensation ratio 1.5; \(\gamma\) = 0.28; compensation \(\gamma\) = 0.1; initial epochs 145; initial learning rate 0.03468; initial weight decay 0.0005; Scheduler: multi-step, milestones 30 and 60, with scheduler \(\gamma = 0.28\); warmup epochs 0.
\item \textbf{CIRCLE}: reservoir internal dimension 460; head ensemble size 9; feature ensemble size 8; reservoir output dimension 900; CNN-stem activation leaky ReLU; CNN-stem channels 4, 6, and 6; CNN-stem kernels 3, 3, and 3; internal state width 344; leak 0.8; leaky-ReLU slope 0.01; reservoir layers 1; patch sizes 7, 8, and 16; sparsity 0.9; spectral radius 0.9.
\end{enumerate}

\paragraph{ImageNet-Subset, \(T=20\).}
\begin{enumerate}
\item \textbf{AdaGauss}: batch size 128; \(\alpha\) = 1; \(\lambda\) = 10; learning rate 0.1; adapter learning rate 0.01; backbone learning rate 0.01; epochs 220; \(s\) = 64; singular-value fraction 0.95; \(\tau\) = 2; 224-pixel inputs enabled; weight decay 0.0005.
\item \textbf{EFC++}: batch size 64; balanced epochs 50; balanced learning rate 0.001; damping 0.1; first-task epochs 100; later-task epochs 100; \(\lambda\) = 10; prototype update -0.2.
\item \textbf{ADC}: batch size 128; epochs 400; initial epochs 400; initial learning rate 0.05; \(\lambda\) = 20; learning rate 0.05; temperature 2; weight decay 0.0005.
\item \textbf{ACIL}: incremental batch size 256; initial batch size 256; buffer size 16384; decay 0.1; \(\gamma\) = 0.1; initial epochs 180; initial learning rate 0.02; initial weight decay 0.0005.
\item \textbf{FeCAM}: batch size 64; \(\alpha_1\) = 1; \(\alpha_2\) = 1; \(\beta\) = 0.5; initial epochs 400; initial learning rate 0.1; initial weight decay 0.0005.
\item \textbf{FeTRIL}: batch size 256; epochs 200; initial epochs 200; initial learning rate 0.02; initial weight decay 0.0005; learning rate 0.02; temperature 2.
\item \textbf{IL2A}: batch size 128; epochs 200; \(\gamma\) = 0.1; learning rate 0.001; augmentation ratio 2.5; step size 45; temperature 2; softmax temperature 0.1.
\item \textbf{PASS}: batch size 32; \(\gamma\) = 0.1; feature-distillation weight 10; prototype weight 10; learning rate 0.0002; step size 45; temperature 2; softmax temperature 0.1; weight decay 0.0002.
\item \textbf{LwF}: batch size 256; epochs 110; initial epochs 110; initial learning rate 0.02; initial learning-rate decay 0.02; initial weight decay 0.0005; \(\lambda\) = 3; learning rate 0.02; learning-rate decay 0.02; temperature 2; weight decay 0.0005.
\item \textbf{DS-AL}: incremental batch size 256; initial batch size 256; buffer size 16384; compensation ratio 1.5; \(\gamma\) = 0.12; compensation \(\gamma\) = 0.1; initial epochs 180; initial learning rate 0.02; initial weight decay 0.0005; Scheduler: multi-step, milestones 30 and 60, with scheduler \(\gamma = 0.1\); warmup epochs 0.
\item \textbf{CIRCLE}: reservoir internal dimension 460; head ensemble size 9; feature ensemble size 8; reservoir output dimension 900; CNN-stem activation leaky ReLU; CNN-stem channels 4, 6, and 6; CNN-stem kernels 3, 3, and 3; internal state width 344; leak 0.8; leaky-ReLU slope 0.01; reservoir layers 1; patch sizes 7, 8, and 16; sparsity 0.9; spectral radius 0.9.
\end{enumerate}

\paragraph{ImageNet-Subset, \(T=50\).}
\begin{enumerate}
\item \textbf{AdaGauss}: batch size 128; \(\alpha\) = 1; \(\lambda\) = 10; learning rate 0.075; adapter learning rate 0.0075; backbone learning rate 0.0075; epochs 55; \(s\) = 64; singular-value fraction 0.95; \(\tau\) = 2; 224-pixel inputs enabled; weight decay 0.0005.
\item \textbf{EFC++}: batch size 64; balanced epochs 12; balanced learning rate 0.001; damping 0.1; first-task epochs 100; later-task epochs 100; \(\lambda\) = 10; prototype update -0.2.
\item \textbf{ADC}: batch size 96; epochs 100; initial epochs 100; initial learning rate 0.0375; \(\lambda\) = 20; learning rate 0.0375; temperature 2; weight decay 0.0005.
\item \textbf{ACIL}: incremental batch size 384; initial batch size 128; buffer size 16384; decay 0.28; \(\gamma\) = 0.28; initial epochs 122; initial learning rate 0.0432; initial weight decay 0.0005.
\item \textbf{FeCAM}: batch size 384; \(\alpha_1\) = 1; \(\alpha_2\) = 1; \(\beta\) = 0.5; initial epochs 204; initial learning rate 0.3; initial weight decay 0.0005.
\item \textbf{FeTRIL}: batch size 128; epochs 150; initial epochs 150; initial learning rate 0.015; initial weight decay 0.0005; learning rate 0.015; temperature 2.
\item \textbf{IL2A}: batch size 128; epochs 50; \(\gamma\) = 0.1; learning rate 0.00025; augmentation ratio 2.5; step size 45; temperature 2; softmax temperature 0.1.
\item \textbf{PASS}: batch size 16; epochs 50; \(\gamma\) = 0.1; feature-distillation weight 10; prototype weight 10; learning rate 0.0002; step size 45; temperature 2; softmax temperature 0.1; weight decay 0.0002.
\item \textbf{LwF}: batch size 128; epochs 82; initial epochs 82; initial learning rate 0.005; initial learning-rate decay 0.02; initial weight decay 0.0005; \(\lambda\) = 3; learning rate 0.005; learning-rate decay 0.02; temperature 2; weight decay 0.0005.
\item \textbf{DS-AL}: incremental batch size 384; initial batch size 128; buffer size 16384; compensation ratio 1.5; \(\gamma\) = 0.28; compensation \(\gamma\) = 0.1; initial epochs 122; initial learning rate 0.03672; initial weight decay 0.0005; Scheduler: multi-step, milestones 30 and 60, with scheduler \(\gamma = 0.28\); warmup epochs 0.
\item \textbf{CIRCLE}: reservoir internal dimension 460; head ensemble size 9; feature ensemble size 8; reservoir output dimension 900; CNN-stem activation leaky ReLU; CNN-stem channels 4, 6, and 6; CNN-stem kernels 3, 3, and 3; internal state width 344; leak 0.8; leaky-ReLU slope 0.01; reservoir layers 1; patch sizes 7, 8, and 16; sparsity 0.9; spectral radius 0.9.
\end{enumerate}

\paragraph{ImageNet-Subset, \(T=100\).}
\begin{enumerate}
\item \textbf{AdaGauss}: batch size 128; \(\alpha\) = 1; \(\lambda\) = 10; learning rate 0.075; adapter learning rate 0.01; backbone learning rate 0.01; epochs 55; \(s\) = 64; singular-value fraction 0.95; \(\tau\) = 2; 224-pixel inputs enabled; weight decay 0.0005.
\item \textbf{EFC++}: batch size 96; balanced epochs 40; balanced learning rate 0.00125; damping 0.16; first-task epochs 75; later-task epochs 75; \(\lambda\) = 8; prototype update -0.25.
\item \textbf{ADC}: batch size 128; epochs 115; initial epochs 115; initial learning rate 0.028125; \(\lambda\) = 30; learning rate 0.028125; temperature 1; weight decay 0.0005.
\item \textbf{ACIL}: incremental batch size 256; initial batch size 64; buffer size 16384; decay 0.28; \(\gamma\) = 0.392; initial epochs 110; initial learning rate 0.03024; initial weight decay 0.0005.
\item \textbf{FeCAM}: batch size 64; \(\alpha_1\) = 1.5; \(\alpha_2\) = 1.2; \(\beta\) = 0.6; initial epochs 440; initial learning rate 0.2; initial weight decay 0.0005.
\item \textbf{FeTRIL}: batch size 128; epochs 100; initial epochs 230; initial learning rate 0.017; initial weight decay 0.0005; learning rate 0.03; temperature 1.
\item \textbf{IL2A}: batch size 128; epochs 180; \(\gamma\) = 0.1; learning rate 0.0005; augmentation ratio 2.5; step size 45; temperature 0.3; softmax temperature 0.3.
\item \textbf{PASS}: batch size 48; \(\gamma\) = 0.16; feature-distillation weight 15; prototype weight 1.5; learning rate 0.0003; step size 45; temperature 0.1; softmax temperature 0.1; weight decay 0.0002.
\item \textbf{LwF}: batch size 96; epochs 70; initial epochs 70; initial learning rate 0.006; initial learning-rate decay 0.01; initial weight decay 0.0005; \(\lambda\) = 6; learning rate 0.006; learning-rate decay 0.02; temperature 4; weight decay 0.0005.
\item \textbf{DS-AL}: incremental batch size 256; initial batch size 64; buffer size 16384; compensation ratio 1.25; \(\gamma\) = 0.392; compensation \(\gamma\) = 0.1; initial epochs 110; initial learning rate 0.03024; initial weight decay 0.0005; Scheduler: multi-step, milestones 30 and 60, with scheduler \(\gamma = 0.28\); warmup epochs 0.
\item \textbf{CIRCLE}: reservoir internal dimension 460; head ensemble size 9; feature ensemble size 8; reservoir output dimension 900; CNN-stem activation leaky ReLU; CNN-stem channels 4, 6, and 6; CNN-stem kernels 3, 3, and 3; internal state width 344; leak 0.8; leaky-ReLU slope 0.01; reservoir layers 1; patch sizes 7, 8, and 16; sparsity 0.9; spectral radius 0.9.
\end{enumerate}

\end{document}